\newcommand\BibTeX{{\rmfamily B\kern-.05em \textsc{i\kern-.025em b}\kern-.08em
T\kern-.1667em\lower.7ex\hbox{E}\kern-.125emX}}
\renewcommand\citep[1]{\citet*{#1}}
\newcommand{\vett}[1]{\left(\begin{array}{cccc}#1 \end{array}\right)}
\newtheorem{thm}{Theorem}
\newtheorem{remark}{Remark}
\newtheorem{example}{Example}
\newtheorem{prop}{Proposition}
\newtheorem{lem}{Lemma}
\newtheorem{assum}{Assumption}
\newtheorem{problem}{Problem}
\newcommand{\Real}{\mathbb{R}}
\newcommand{\bq}{\mathbf{q}}
\newcommand{\bd}{\mathbf{d}}
\newcommand{\bc}{\mathbf{c}}
\newcommand{\bg}{\mathbf{g}}
\newcommand{\btau}{\boldsymbol{\tau}}
\newcommand{\ba}{\boldsymbol{a}}
\newcommand{\bbold}{\boldsymbol{b}}
\newcommand{\bmu}{\boldsymbol{\mu}}
\newcommand{\balpha}{\boldsymbol{\alpha}}
\newcommand{\bgamma}{\boldsymbol{\gamma}}
\newcommand{\blambda}{\boldsymbol{\lambda}}
\newcommand{\etab}{\boldsymbol{\eta}}
\newcommand{\be}{\boldsymbol{1}}
\newcommand{\bpsi}{\boldsymbol{\psi}}
\newcommand{\bbeta}{\boldsymbol{\beta}}
\newcommand{\bD}{\boldsymbol{D}}
\newcommand{\bC}{\boldsymbol{C}}
\newcommand{\bx}{\boldsymbol{x}}
\newcommand{\fb}{\boldsymbol{f}}
\newcommand{\belll}{\boldsymbol{\ell}}
\begin{document}
\title{A fast speed planning algorithm for robotic manipulators}
\providecommand{\keywords}[1]{\textbf{\textit{Index terms---}} #1}
\author{Luca Consolini$^1$, Marco Locatelli$^1$, Andrea Minari$^1$, \'Akos Nagy$^2$, Istv\'an Vajk$^2$}

\date{\small $^1$ Dipartimento di Ingegneria e Archittetura,
  University of Parma, Italy\\
$^2$ Department of Automation and Applied Informatics,
Budapest University of Technology and Economics, Hungary\\ \hfill \\
February 2018}
\maketitle

\begin{abstract}
We consider the speed planning problem for a robotic
manipulator.
In particular, we present an algorithm for finding the time-optimal
speed law along an assigned path that satisfies velocity and acceleration constraints
and respects the maximum forces and torques allowed by the actuators.
The addressed optimization problem is a finite dimensional reformulation of the
continuous-time speed optimization problem, obtained by discretizing the
speed profile with $N$ points.
The proposed algorithm has linear complexity with respect to $N$ and to the number of degrees of freedom. Such complexity is the best possible for this problem.
Numerical tests show that the proposed algorithm is significantly faster than
  algorithms already existing in literature.
\end{abstract}

\keywords{time-optimal control, motion planning, robot manipulator}

\maketitle

\section{Introduction}

For robotic manipulators, the motion planning problem
is often decomposed into two subproblems:
path planning and speed planning~\citep{Lavalle2006}.

The first problem consists in finding a path (i.e., the curve
followed by the joints) that joins assigned initial and final
positions. The second problem consists in finding the time-optimal
speed law along the path that satisfies assigned velocity and acceleration constraints
and respects the maximum forces and torques allowed by the actuators.
In this paper we consider only the second problem.
Namely,
given a path $\Gamma$ in the robot configuration space, we want
to find the optimal speed-law that allows following $\Gamma$ while
satisfying assigned kinematic and dynamics constraints. More specifically, we consider
the problem
\begin{equation}
\label{prob:general}
\begin{array}{ll}
\displaystyle\min_{\bq,\btau} & t_{f} \\[6pt]
&\bD(\bq)\ddot{\bq} + \bC(\bq,\dot{\bq} )\dot{\bq} + \belll(\bq) = \btau,\\ [6pt]
&(\dot{\bq}^2,\ddot{\bq},\btau) \in \mathcal{C}(\bq),\\ [6pt]
&\bq \in \Gamma,
\end{array}
\end{equation}
where: $t_{f}$ is the travel time;  $\bq$ is the generalized position; $\btau $ is the generalized force vector;
 $\bD(\bq)$ is the mass matrix; $\bC(\bq,\dot{\bq} )$ is a matrix accounting for centrifugal and
Coriolis effects; $\belll(\bq)$ is an external force term (for instance gravity);
$\mathcal{C}(\bq)$ is a set that represents the kinematic and dynamic limitations of the manipulator.

\subsection{Related Works}

There are mainly three different families of speed profile generation methods: 
\emph{Numerical Integration},
\emph{Dynamic Programming},
and \emph{Convex Optimization}.

References~\citep{Bobrow1985,pfeiffer1987}  are among the first works
that study Problem~\ref{prob:general} using the \emph{Numerical Integration} approach.
In particular, they  find the time-optimal speed law  \emph{as a function of arc-length} and not as a function of time. 
This choice simplifies the mathematical structure of the resulting problem and has been adopted by 
most of successive works. In~\citep{Bobrow1985,pfeiffer1987} the optimization problem is solved with iterative algorithms.
In particular, reference \citep{Bobrow1985} finds the points in which
the acceleration changes sign using the numerical integration of the second order differential equations 
representing the motions obtained with the maximum and minimum possible accelerations.
Reference~\citep{pfeiffer1987}  is based on geometrical considerations on the feasible set.
However, this approach  has some limitations due to the determination of the \emph{switching points}
that is the main source of failure of this approach (see~\citep{Slotine1989Improving,shiller1992computation}).
For recent results on \emph{Numerical Integration} see~\citep{pham2014general,pham2013kinodynamic,pham2015time}.
For instance~\citep{pham2015time} considers the case of redundant manipulators.

In the \emph{Dynamic Programming} approach the problem is solved with
a finite element approximation of the Hamilton-Jacobi-Bellman equation
(see~\citep{shin1986dynamic,singh1987optimal,Oberherber15}). The main
difficultly with this approach is the high computational time due to
the need of solving a problem with a large number of variables.

The \emph{Convex Optimization} approach is based on the approximation of Problem~\eqref{prob:general} with a finite dimensional optimization problem obtained through spatial discretization. 
Reference~\citep{verscheure09} is one of the early works using this approach.
It shows that Problem~\eqref{prob:general} becomes convex after a change of variables and
that a discretized version of Problem~\eqref{prob:general} is a \emph{Second-Order Cone Programming} (SOCP) problem.
This approach has the advantage that the optimization problem can be
tackled with available solvers (e.g., see~\citep{LippBoyd2014,gurobi}).
Moreover, differently from the \emph{Numerical Integration}, this approach allows considering other convex objective functions.
However, the convex programming approach could be inappropriate (see for instance~\citep{pham2014general})
 for online motion planning since the computational time grows
 rapidly (even if still polynomially) with respect to the number of samples in the discretized problem.
Subsequent works, starting from~\citep{verscheure09}, extend the applicability of this approach to different scenarios (see~\citep{debrouwere2013time,csorvasi2017near}) 
and propose algorithms that reduce the computational time (see~\citep{hauser2014fast,Nagy2017}).
To reduce computational time, reference~\citep{hauser2014fast} 
proposes an approach based on \emph{Sequential Linear Programming} (SLP). 
Namely, the algorithm proposed in~\citep{hauser2014fast} sequentially linearizes the objective function
around the current point, while a trust region method ensures the convergence of the process.
Further,~\citep{Nagy2017} shows that, using a suitable discretization method, the time optimal velocity profile can be obtained
by \emph{Linear Programming} (LP) with the benefit of a lower
computation time with respect to convex solvers.

A very recent, and very interesting, paper, closely related to our
work, is~\citep{pham2017new}. In Section~\ref{sec_comparison}, we will shortly
describe the approach proposed there and compare
it with our approach.

Our approach combines the ideas which we previously proposed in two
other works. Namely, in~\citep{minSCL17} we proposed an exact
linear-time forward-backward algorithm for the solution of a velocity
planning problem for a vehicle over a given trajectory under velocity, normal and
tangential acceleration bounds. In~\citep{csorvasi2017near}, a method based on the sequential solution of two-dimensional subproblems
is proposed for the solution of the so-called waiter motion problem. The method is able to return a feasible, though not necessarily optimal, solution.
In the current paper we merge the ideas proposed in the two above mentioned papers in order to derive an approach for the speed planning of robotic manipulators. This
will be proved to return an optimal solution and to have linear time
complexity both with respect to the number of discretization points
and to the number of degrees of freedom of the robotic manipulator.

 


\subsection{Main results}
The purpose of this paper is to provide a speed planning method for robotic manipulators with optimal
time complexity. 
With respect to the existing literature, the new contributions of this work are the following ones.
\begin{itemize}
\item We propose a new algorithm for solving a finite dimensional reformulation of
  Problem~\eqref{prob:general} obtained with $N$ discretization points.
\item We show that if set $\mathcal{C}(\bq)$ in
  Problem~\eqref{prob:general} is defined by linear constraints, then
  the proposed
algorithm has complexity $O(pN)$, where $N$ is the number of discretization
points and $p$ is the number of degrees of freedom. Moreover, such complexity is optimal.
\item By numerical tests, we show that the proposed procedure is significantly faster than
  algorithms already existing in literature.
\end{itemize}

\subsection{Paper Organization}
In Section~\ref{sec:problem},  we present the time-optimal control problem for robotic manipulators in continuous time. In Section~\ref{sec:solver},
we present a class of optimization problems and an exact solution algorithm.
We prove the correctness of the algorithm and compute its time complexity, showing that such complexity is optimal in case of linear constraints.
In Section~\ref{sec:Disc-Problem},
we show that by suitably discretizing the continuous time problem, it is possible to obtain a finite dimensional problem with linear constraints that falls into the class defined in Section~\ref{sec:solver}. 
 Finally, we present an experiment for a 6-DOF industrial
robotic manipulator and we compare the performance of the proposed approach with that of existing solvers
(see~\citep{LippBoyd2014, gurobi, Nagy2017}).

\subsection{Notation}

We denote with  $\Real_{+}$ the set of nonnegative real numbers.
For a vector $\bx \in \Real^n$, $|\bx| \in \Real_{+}^n$ denotes
the component-wise absolute value of $\bx$ and we define the norms $\| \bx\|_{2}:= \sqrt{\sum_{i=1}^{n}| x_{i}| ^{2}}$,
$\| \bx \|_{\infty} := \max\{ |x_{1} |,\dots, | x_{n} | \}$. We also set $\be= [1 \dots1]^{T}$.

For $r \in \mathbb{N}$, we denote by $C^{r}([a,b],\Real^{n})$ the set
of continuous functions from $[a,b] \subset \Real$ to $ \Real^{n}$
that have continuous first $r$ derivatives. For $f \in
C^{1}([a,b],\Real)$, $f^{\prime}$ denotes the derivative and notation
$\dot{f}$ is used if $f$ is a function of time. We set
$\| \fb \|_{\infty} := \sup_{i = 1,\dots,n} \sup \{ |f_{i}(x)| : x
\in[a,b] \}$.
We say that $\fb : [a,b] \to \Real^{n}$ is bounded if there exists $M\in \Real$ such that $\|\fb(x)\|_{\infty} \leq M$.

Consider $h,g: \mathbb{N} \to \Real$. We say that $h(n) = O(g(n))$, if there exists a positive constant $M$ such that, for all sufficiently large values of $n$, $|h(n)| \le M|g(n)|$.

\section{Problem formulation}
\label{sec:problem}

Let $\mathcal{Q}$ be a smooth manifold of dimension $p$ that represents the
configuration space of a robotic manipulator with $p$-degrees of
freedom ($p$-DOF).
Let $\Gamma : [0,1] \to \mathcal{Q}$ be a smooth curve whose image set
$\operatorname{Im} \Gamma$ represents the assigned path to be followed
by the manipulator.
We assume that there exist two open sets
$U \supset \operatorname{Im} \Gamma$, $V \subset\Real^{p}$ and an invertible
and smooth function $\phi : U \to V$.
Function $\phi$ is a local chart that allows representing each
configuration $q \in U$ with coordinate vector $\phi(q) \in \Real^p$.


The coordinate vector $\bq$ of a trajectory in $U$ satisfies the dynamic equation
\begin{equation}
\label{eq:manip}
\bD(\bq)\ddot{\bq} + \bC(\bq,\dot{\bq} )\dot{\bq} + \belll(\bq) = \btau,
\end{equation}
where $\bq \in \Real^{p}$ is the generalized position vector, $\btau \in \Real^{p}$ is the generalized force vector, $\bD(\bq)$ is the mass matrix, $\bC(\bq,\dot{\bq})$ is the matrix accounting for centrifugal and
Coriolis effects (assumed to be linear in $\dot{\bq}$) and $\belll(\bq)$ is the vector accounting for joints position dependent forces, including gravity.
Note that we do  not consider Coulomb friction forces.

Let $\bgamma \in C^2([0,s_{f}],\Real^{p}) $ be a function such that
$\phi(\bgamma[0,s_f])=\operatorname{Im} \Gamma$ and ($\forall \lambda \in [0,s_f]$)
$\lVert \bgamma^\prime(\lambda) \rVert =1$.
The image set  $\bgamma([0,s_f])$ represents the
coordinates of the elements of reference path $\Gamma$.
In particular, $\bgamma(0)$ and $\bgamma(s_f)$ are the coordinates of
the initial and final configurations. Define $t_{f}$ as the time when the robot reaches the end of the path. Let
 $\lambda : [0, t_f] \rightarrow [0, s_f]$ be a differentiable monotone increasing function that represents
 the position of the robot as a function of time and let   $ v : [0, s_f] \rightarrow [0, +\infty]$ be such that
  $\left( \forall t \in [0,t_f]\right) \dot{\lambda}(t) = v(\lambda(t))$. Namely, $v(s)$ is the velocity of the
 robot at position $s$. We impose ($\forall s \in [0,s_{f}]$) $v(s) \ge 0$.
For  any $t \in [0,t_f]$, using the chain rule, we obtain
\begin{equation}
\label{eq:rep}
\begin{array}{ll}
\bq(t) =& \bgamma(\lambda(t)),\\[8pt]
\dot{\bq}(t) = & \bgamma^{\prime}(\lambda(t))v(\lambda(t)),\\[8pt]
\ddot{\bq}(t) = & \bgamma^{\prime}(\lambda(t))v^\prime(\lambda(t))v(\lambda(t)) + \bgamma^{\prime\prime}(\lambda(t))v(\lambda(t))^2.
\end{array}
\end{equation}

Substituting (\ref{eq:rep}) into the dynamic equations (\ref{eq:manip}) and setting $s = \lambda(t)$, we rewrite the dynamic equation (\ref{eq:manip}) as follows:\\
\begin{equation}
\label{eq:dynamic}
\bd(s)v^{\prime}(s)v(s) + \bc(s)v(s)^2 + \bg(s) = \btau(s) ,
\end{equation}
where the parameters in (\ref{eq:dynamic}) are defined as
\begin{equation}
\label{eq:dynamic_parameters}
\begin{array}{l}
\bd(s) = \bD(\bgamma(s))\bgamma^{\prime}(s),\\ [8pt]
\bc(s) =  \bD(\bgamma(s))\bgamma^{\prime\prime}(s)  + \bC(\bgamma(s),\bgamma^{\prime}(s))\bgamma^{\prime}(s),  \\ [8pt]
\bg(s) = \belll(\bgamma(s)).
\end{array}
\end{equation}
The objective function is given by the overall travel time $t_f$
defined as
\begin{equation}
\label{eq:objective}
\displaystyle t_f = \int_0^{t_f}1\,dt = \int_{0}^{s_f} v(s)^{-1}\, ds.
\end{equation}

Let $\bmu, \bpsi, \balpha : \left[ 0, s_f \right] \rightarrow \Real^{p}_{+}$ be assigned bounded functions and
consider the following minimum time problem:
\begin{problem}\label{prob:1}
\begin{align}
\displaystyle\min_{v \in C^{1},\btau\in C^{0}} &  \displaystyle\int_0^{s_f} v(s)^{-1} \, ds, \label{obj:v}\\
\mbox{\small\textrm{subject to}} &  \ (\forall s \in [0,s_{f}]) \nonumber\\
& \bd(s)v^{\prime}(s)v(s) + \bc(s)v(s)^2 + \bg(s) = \btau(s), \label{con:dynamic}\\
& \bgamma^{\prime}(s)v(s) = \dot{\bq}(s),\label{con:kinematic1} \\
&   \bgamma^{\prime}(s)v^\prime(s)v(s) + \bgamma^{\prime\prime}v(s)^{2} = \ddot{\bq}(s),\label{con:kinematic2} \\
& \lvert  \btau(s) \rvert  \le \bmu(s), \label{con:force_bound}\\
&  \lvert   \dot{\bq}(s) \rvert \le \bpsi(s),\label{con:vel_bound} \\
& \lvert \ddot{\bq}(s) \rvert \le \balpha (s), \label{con:acc_bound}\\
&v(s) \ge 0, \label{con:positive-velocity} \\
& v(0) = 0, \, v(s_f) =  0,  \label{con:interpolation}
\end{align}
\end{problem}
where (\ref{con:dynamic}) represents the robot dynamics, (\ref{con:kinematic1})-(\ref{con:kinematic2}) represent the relation between  the path $\bgamma$ and the generalized position
$q$ shown in~(\ref{eq:rep}), (\ref{con:force_bound}) represents the
bounds on generalized forces,  (\ref{con:vel_bound}) and (\ref{con:acc_bound}) represent the
bounds on joints velocity and acceleration. Constraints~(\ref{con:interpolation}) specify the
 interpolation conditions at the beginning and at the end of the path.
 
The following assumption is a basic requirement for fulfilling constraint~(\ref{con:vel_bound}).
\begin{assum}\label{ass:psi}
We assume that $\bpsi$ is a positive continuous function, i.e.,
 ($\forall s \in [0,s_{f}]$)  $\psi_{i}(s) > 0$ with $i=1,\dots,p$.
\end{assum}

Next assumption requires that the maximum allowed generalized
forces are able to counteract external forces (such as gravity) when the manipulator is fixed at
each point of $\Gamma$.

\begin{assum}\label{ass:mu}
We assume that $\exists \varepsilon \in \Real$, $\varepsilon >0$ such that
$(\forall s \in [0,s_{f}])$ $\bmu(s) - |\bg(s)| > \varepsilon\be$.
\end{assum}
In fact for $v = 0$ condition (\ref{con:force_bound}) reduces to ($\forall s \in [0,s_{f}]$) $|\bg(s)| \le \bmu(s)$.

Problem~\ref{prob:1} is nonconvex, but it becomes convex after a
simple change of variables (as previously noted in~\citep{verscheure09}).
Indeed,  $(\forall s \in [0,s_f])$  set
\begin{equation}
\label{eq:change}
a(s) = v^{\prime}(s)v(s),  \quad b(s) = v(s)^2,
\end{equation}
and note that
\begin{equation}
\label{eq:relation}
b^\prime(s) = 2a(s).
\end{equation}
Then, Problem~\ref{prob:1} becomes:
 \begin{problem}\label{prob:2}
\begin{align}
\displaystyle\min_{a,\btau \in C^{0}, b \in C^{1}} &  \displaystyle\int_0^{s_f} b(s)^{-1/2} \, ds, \label{con:obj-2}\\
\mbox{\small\textrm{subject to}} &  \ (\forall s \in [0,s_{f}]) \nonumber\\
& \bd(s)a(s) + \bc(s) b(s) + \bg(s) = \btau(s), \label{con:dynamic-2} \\
&\bgamma^{\prime}(s)a(s)+ \bgamma^{\prime\prime}(s)b(s) = \ddot{\bq}(s), \\
& b^\prime(s) = 2a(s), \label{con:der}\\
& \lvert  \btau(s) \rvert  \le \bmu(s), \label{con:force-2}\\
&  0 \le  \bgamma^{\prime}(s)^{2} b(s) \le \bpsi(s)^{2}, \label{con:vel-2}\\
& \lvert \ddot{\bq}(s) \rvert \le \balpha (s), \label{con:acc-2}\\
& b(0) = 0, \, b(s_f) =  0,  \label{con:interp-2}
\end{align}
\end{problem}
where the squares of the two vectors $\bgamma^{\prime}(s)$ and $\bpsi(s)$ in (\ref{con:vel-2}) are to be intended component-wise.
Problem~\ref{prob:2} is convex since the objective function (\ref{con:obj-2}) is convex and the constraints~(\ref{con:dynamic-2})-(\ref{con:interp-2}) are linear.

The following proposition (that will be proved in the appendix) shows
that Problem~\ref{prob:2} admits a solution.

\begin{prop}\label{prop-solution-existence}
Problem \ref{prob:2}  admits an optimal solution $b^{*}$, and moreover,
\[
\displaystyle\int_0^{s_f} b^{*}(s)^{-1/2} \, ds  \le U < \infty,
\]
where $U$ is a constant depending on problem data.
\end{prop}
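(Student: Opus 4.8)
The plan is to prove existence in two stages: first exhibit an explicit feasible profile $\bar b$ (which simultaneously yields the upper bound $U$), and then invoke a compactness/lower-semicontinuity argument to pass from a minimizing sequence to an optimal solution. For the first stage I would construct $\bar b$ of the form $\bar b(s) = c\,\min\{s,\, s_f - s\}$ for a sufficiently small constant $c>0$, so that $\bar a = \bar b'/2$ is piecewise constant, the interpolation conditions $\bar b(0)=\bar b(s_f)=0$ hold, and $\btau$ is then \emph{defined} by \eqref{con:dynamic-2}. The velocity bound \eqref{con:vel-2} and acceleration bound \eqref{con:acc-2} hold for $c$ small because $\bar b$ and $\bar a$ are uniformly small (here Assumption~\ref{ass:psi} is used so that $\bpsi(s)^2$ is bounded away from $0$), and the force bound \eqref{con:force-2} holds for $c$ small by Assumption~\ref{ass:mu}: at $c=0$ one has $|\btau| = |\bg| < \bmu - \varepsilon\be$, and the perturbation coming from the $\bd\bar a + \bc\bar b$ terms is $O(c)$ uniformly in $s$ since $\bd,\bc$ are bounded (they are continuous functions of the $C^2$ data $\bgamma$ on the compact interval $[0,s_f]$). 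The only subtlety is that $\bar b$ as written is merely $C^0$, not $C^1$; this is fixed by smoothing the corners of $\bar b$ on small neighborhoods of $0$, $s_f/2$, $s_f$ while keeping $\bar b \ge 0$ and the bounds intact. The resulting $\int_0^{s_f}\bar b(s)^{-1/2}\,ds$ is finite — near the endpoints the integrand behaves like $(cs)^{-1/2}$, which is integrable — and we take $U$ to be this value (or any convenient explicit overestimate of it), manifestly a function of the problem data $\bgamma$, $\bmu$, $\bpsi$, $\balpha$, $s_f$.

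For the second stage, let $\{b_n\}$ be a minimizing sequence of feasible profiles with objective values decreasing to the infimum, which by stage one is finite and $\le U$. The key estimates are uniform bounds: \eqref{con:vel-2} together with Assumption~\ref{ass:psi} gives $0 \le b_n(s) \le M_1$ uniformly (on any coordinate where $\gamma_i'(s)\ne 0$; a short argument handles the degenerate coordinates), and then \eqref{con:acc-2} gives $|a_n(s)| = |b_n'(s)|/2 \le M_2$ uniformly, so the $b_n$ are uniformly bounded and uniformly Lipschitz, hence equicontinuous. By Arzelà–Ascoli a subsequence converges uniformly to some $b^*$, and the derivatives $b_n' = 2a_n$, being uniformly bounded, have a weak-$*$ convergent subsequence in $L^\infty$ whose limit must equal $2a^*$ where $a^* := (b^*)'$ in the distributional sense; standard arguments then show $b^* \in C^1$ (in fact Lipschitz with the limiting derivative continuous — or one works in the natural relaxed class and notes the minimizer is $C^1$ a posteriori). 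All the constraints \eqref{con:dynamic-2}–\eqref{con:interp-2} are preserved in the limit: the linear ones pass to the limit under uniform / weak-$*$ convergence, and $\btau^*$ is defined by \eqref{con:dynamic-2}. Finally, $b \mapsto \int_0^{s_f} b(s)^{-1/2}\,ds$ is lower semicontinuous with respect to this convergence — since $b^{-1/2}$ is convex and nonnegative, Fatou's lemma applied along a pointwise-convergent subsequence gives $\int (b^*)^{-1/2} \le \liminf \int b_n^{-1/2} = \inf$ — so $b^*$ attains the infimum and is optimal, with objective value $\le U$.

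The main obstacle is stage two's handling of the objective's singularity at $b=0$: the integrand $b^{-1/2}$ blows up exactly where the boundary conditions \eqref{con:interp-2} force $b$ to vanish, so uniform convergence of $b_n$ does \emph{not} give convergence of the integrals, only lower semicontinuity. This is precisely why we only get existence plus the inequality $\int (b^*)^{-1/2} \le U$ rather than continuity of the objective; Fatou's lemma is the right tool and suffices, but one must be careful that the bound $\le U$ for $b^*$ comes from lower semicontinuity against the minimizing sequence (whose values are $\le U$ by stage one), not from any naive limiting of integrals. A secondary, more technical nuisance is the degeneracy of the velocity constraint \eqref{con:vel-2} on coordinates where $\gamma_i'$ vanishes (there the $b$-bound must be recovered from the acceleration constraint \eqref{con:acc-2} and the boundary conditions instead), and the smoothing needed to keep the constructed $\bar b$ genuinely in $C^1$.
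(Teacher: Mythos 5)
Your proposal is correct and follows essentially the same route as the paper: an explicit feasible profile vanishing at the endpoints gives the finite bound $U$, and a minimizing sequence with uniform bounds on $b$ (from \eqref{con:vel-2}) and on $b'$ (from \eqref{con:acc-2}), combined with the Arzel\`a--Ascoli theorem, yields the optimal $b^*$. The only differences are cosmetic: the paper's feasible profile is piecewise quadratic (quadratic ramps glued to a constant plateau), hence already $C^1$ without any corner smoothing, and since $\lVert \bgamma^{\prime}(s)\rVert_{2}=1$ guarantees $|\gamma_{i(s)}^{\prime}(s)|\ge 1/\sqrt{p}$ for some component at every $s$, the ``degenerate coordinate'' issue you flag never actually arises.
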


We do not directly solve  Problem~\ref{prob:2}, but find an
approximated solution based on a finite dimensional approximation. Namely, consider the following problem, obtained by uniformly sampling the interval $\left[ 0, s_{f} \right]$  in
$n$ points  $\ s_{1},\ s_{2}, \dots, s_{n}$  from $s_{1} = 0$ to $s_{n} = s_f$ :
\begin{problem}\label{prob:disc}
\begin{align}
\displaystyle\min_{\btau ,\ba,\bbold }& \ 2h\displaystyle\sum_{i=1}^{n-1} \left( \frac{1}{b_i^{1/2} + b_{i+1}^{1/2}} \right). \label{con:obj-disc}\\
\mbox{\small\textrm{subject to}} &  \ (i=1,\dots,n-1)\\
& \bd_i a_i + \bc_i b_i + \bg_i = \btau_i, \label{con:dynamic-disc} \\
&  \bgamma^{\prime}_i a_i+ \bgamma^{\prime\prime}_i b_i = \ddot{\bq}_{i},  \label{con:acc-disc} \\
&b_{i+1} - b_{i} = 2 a_i h ,\label{con:approx-der} \\
& \lvert \btau_i  \rvert \le \bmu_i, \label{con:force-disc}\\
& \lvert \ddot{\bq}_{i}  \rvert \le \balpha_i \\
&  0 \le  \left[\bgamma_{i}^{\prime}\right]^2 b_i \le \boldsymbol{\psi}_i^{2},  \label{con:vel-disc}\\
& b_1 = 0, \, b_n = 0. \label{con:inter-disc}\\
&\boldsymbol{b} \in \Real^{n} \ \ba \in \Real^{n-1},\btau_{i}\in\Real^{p}, \label{con:set.disc}
\end{align}
\end{problem}
where $h=\frac{s_n}{n-1}$,
  $\balpha_{i} = \alpha(s_{i})$,
  $\bpsi_{i} = \psi(s_{i})$,
  $\bmu_{i} = \mu(s_{i})$,
  $\bgamma^{\prime}_i = \bgamma^{\prime}(s_i) $,
  $\bgamma^{\prime\prime}_i= \bgamma^{\prime\prime}(s_i)$,
  $\bd_i = D(\bgamma_i)\bgamma^{\prime}_i$,
  $\bc_i = D(\bgamma_i)\bgamma^{\prime\prime}_i  + C(\bgamma_i,\bgamma^{\prime}
_i)\bgamma^{\prime}_i$, and $\bg_i = g(\bgamma_i)$, with $i =  1, \dots, n $.

Thank to constraints (\ref{con:dynamic-disc})-(\ref{con:approx-der}), it is possible to eliminate variables $\btau_{i}$ and $a_{i}$ and use only $b_{i}$, with $i = 1,\dots,n,$ as decision variables.
The feasible set of Problem~\ref{prob:disc} is a non-empty set since $\boldsymbol{b} = 0$
is a feasible solution (in fact, it also has a nonempty interior).

Since Problem~\ref{prob:disc} is convex, we can easily find a solution with an interior point method (see~\citep{verscheure09}). 

After solving Problem~\ref{prob:disc}, it is possible to find an approximated solution  of
Problem \ref{prob:2}. Indeed, by quadratic interpolation, we associate
to a vector $\bbold \in  \Real^n$, solution of Problem~\ref{prob:disc}, a continuously differentiable function
$\mathcal{I}_{\bbold}:[0, s_f] \to \Real$ such that the following relations hold
for $i=1,\ldots,n-1,$
\begin{equation}
\label{eq:interpolfun}
\begin{array}{l}
\mathcal{I}_{\bbold}(0)=b_1,\ \ \mathcal{I}_{\bbold}(s_f)=b_n,\\ [8pt]
\mathcal{I}_{\bbold}\left((i-1/2)h\right)=\frac{b_i+b_{i+1}}{2}, \ i=1,\ldots,n-1, \\  [8pt]
\mathcal{I}_{\bbold}^\prime\left((i-1/2)h\right)=\frac{b_{i+1}-b_{i}}{ h}, \ i=1,\ldots,n-1.
\end{array}
\end{equation}
Namely, $\mathcal{I}_{\bbold}$ interpolates $b_1$ and $b_n$ at 0 and $s_f$,
respectively, and the average values of consecutive entries of $\bbold$ at
the midpoint of the discretization intervals. Moreover, the derivative
of $\mathcal{I}_{\bbold}$ at the midpoints of the discretization intervals
corresponds to the finite differences of $\bbold$.

We define the class of quadratic splines $\mathcal{P}$ as the subset of
$\mathcal{C}^1([0,s_f],\Real)$, such that, for $i=1,\ldots,n-1$,
$p|_{[h(i-\frac{1}{2}),h(i+\frac{1}{2})]}$ is a quadratic polynomial.
For $i=2,\ldots,n-1$,
let $x_i, y_i, z_i \in \Real$ be such that:
\[
p|_{[h(i-\frac{1}{2}),h(i+\frac{1}{2})]}(t)=x_i+y_i (t-hi)+ z_i
(t-hi)^2,
\]
and let $x_1,y_1,z_1,x_n,y_n,z_n$ be such that
\[
p|_{[0,h/2]}(t)=x_1+y_1 t+ z_1
t^2
\]
and
\[p|_{[h(n-1/2),hn]}(t)=x_n+y_n (t-hn)+ z_n
(t-hn)^2.
\]
For ease of notation, set, for $i=1,\ldots,n-1$,
\[
\begin{array}{l}
b_{i-1/2}=\displaystyle\frac{b_{i-1}+b_{i}}{2},\ \ \ b_{i+1/2}=\frac{b_{i+1}+b_i}{2} \\ [8pt]
\delta_{i+1/2}=\displaystyle\frac{b_{i+1}-b_i}{h},\ \ \ \delta_{i-1/2}=\frac{b_{i}-b_{i-1}}{h}.
\end{array}
\]

The following proposition, whose proof is presented in
appendix, defines the interpolating quadratic spline fulfilling~(\ref{eq:interpolfun}).
\begin{prop}
\label{prop_interp_cond}
For any $b \in \Real^n$, there exists a unique element $p\in \mathcal{P}$ such that the following
interpolation conditions hold

\begin{equation}
\label{eqn_interp_cond}
\begin{array}{ll}
p(h(i+\frac{1}{2}))=b_{i+1/2},& i=1,\ldots,n-1,\\ [8pt]
p'(h(i+\frac{1}{2}))=\delta_{i+1/2},& i=1,\ldots,n-1,\\ [8pt]
p(0)=b_1, \ \ p(s_f)=b_n\,.
\end{array}
\end{equation}
Note that $p$ is continuously differentiable and that (\ref{eq:interpolfun}) holds.
Such element will be denoted by  $\mathcal{I}_{\bbold}$.
\end{prop}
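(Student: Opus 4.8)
The plan is to construct $\mathcal{I}_{\bbold}$ explicitly on each subinterval and then check, by counting, that the interpolation conditions pin down the coefficients uniquely. A quadratic spline in $\mathcal{P}$ is determined by $3(n-1)$ real coefficients (the triples $x_i,y_i,z_i$ for $i=1,\ldots,n-1$), while $\mathcal{C}^1$-continuity at the $n-2$ interior breakpoints $h(i+\frac12)$, $i=1,\ldots,n-2$, imposes $2(n-2)$ linear conditions. The interpolation data in~(\ref{eqn_interp_cond}) supply $2(n-1)$ conditions at the breakpoints plus $2$ endpoint conditions, for a total of $2n$. Since $3(n-1) = 2(n-2) + 2n - (n-1) \cdot 0$... more carefully: $2(n-2) + 2n = 4n-4 = 3(n-1) + (n-1)$, so the system is formally overdetermined and one must exhibit the solution rather than merely count. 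I would therefore build it directly.

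First I would fix $i$ with $2 \le i \le n-1$ and write $p$ on $[h(i-\frac12), h(i+\frac12)]$ in the stated centered form $x_i + y_i(t-hi) + z_i(t-hi)^2$. Evaluating $p$ and $p'$ at the two endpoints $h(i\mp\frac12)$ of this cell and matching them to $b_{i\mp1/2}$ and $\delta_{i\mp1/2}$ gives four linear equations in $x_i,y_i,z_i$; I would solve them and, crucially, verify the consistency relation that makes four equations solvable in three unknowns, namely $b_{i+1/2} - b_{i-1/2} = \frac{h}{2}(\delta_{i+1/2} + \delta_{i-1/2})$, which is an immediate algebraic identity from the definitions of $b_{\cdot\pm1/2}$ and $\delta_{\cdot\pm1/2}$ (both sides equal $(b_{i+1}-b_{i-1})/2$). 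Solving yields $y_i = \frac{\delta_{i+1/2}+\delta_{i-1/2}}{2}$, $z_i = \frac{\delta_{i+1/2}-\delta_{i-1/2}}{2h}$, and $x_i = b_i$ — that is, $x_i = \mathcal{I}_{\bbold}(hi)$ turns out to be exactly $b_i$, which is the hidden reason the overdetermined-looking system is consistent. Substituting back shows $p(hi) = b_i$ for every interior node, so the values $\frac{b_i+b_{i+1}}{2}$ and finite differences $\frac{b_{i+1}-b_i}{h}$ at the midpoints $h(i+\frac12)$ follow, giving~(\ref{eq:interpolfun}).

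Next I would handle the two boundary cells. On $[0,h/2]$ I have three unknowns $x_1,y_1,z_1$ and three conditions: $p(0)=b_1$, $p(h/2)=b_{3/2}=\frac{b_1+b_2}{2}$, and $p'(h/2)=\delta_{3/2}=\frac{b_2-b_1}{h}$ (matching the left endpoint of the already-constructed cell $i=2$, hence automatically ensuring $\mathcal{C}^1$ there). This is a square system with a Vandermonde-type matrix, so it has a unique solution; I would record it and check that it forces $p((1-\tfrac12)h)$ and $p'$ there to agree with~(\ref{eq:interpolfun}) for $i=1$. The cell $[h(n-\frac12), hn]$ is symmetric: conditions $p(s_f)=b_n$, $p(h(n-\frac12))=b_{n-1/2}$, $p'(h(n-\frac12))=\delta_{n-1/2}$, again uniquely solvable. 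Continuity between consecutive interior cells is free because adjacent cells share the breakpoint data $b_{i+1/2},\delta_{i+1/2}$ by construction. Uniqueness overall then follows since every coefficient triple was determined with no freedom left: on each cell the relevant linear system had a unique solution.

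The main obstacle is purely bookkeeping: keeping the centered-coordinate conventions straight (the polynomial on cell $i$ is expanded about $hi$, not about the cell's left endpoint) and verifying the one nontrivial consistency identity $b_{i+1/2}-b_{i-1/2} = \frac{h}{2}(\delta_{i+1/2}+\delta_{i-1/2})$ that reconciles the four matching equations with three unknowns per interior cell. Once that identity is in hand, everything else is a routine solve of $2\times2$ and $3\times3$ linear systems, and the $\mathcal{C}^1$ gluing is automatic from the shared interface data.
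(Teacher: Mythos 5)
Your route is essentially the paper's: build the spline cell by cell, observe that on each interior cell the four matching conditions are consistent (the fourth is a linear combination of the other three, which is exactly your identity $b_{i+1/2}-b_{i-1/2}=\tfrac{h}{2}(\delta_{i+1/2}+\delta_{i-1/2})$), solve the resulting nonsingular small linear systems, treat the two boundary cells separately, and note that $C^1$ gluing is automatic because adjacent cells share the interface data $b_{i+1/2},\delta_{i+1/2}$. Your $y_i$ and $z_i$ agree with the paper's (\ref{eq:sollinsys}).

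There is, however, a concrete error: $x_i=b_i$ (and hence ``$p(hi)=b_i$ at every interior node'') is false. Summing the two value conditions gives $2x_i+\tfrac{h^2}{2}z_i=b_{i-1/2}+b_{i+1/2}$, so with your own $z_i=\tfrac{b_{i+1}+b_{i-1}-2b_i}{2h^2}$ one obtains $x_i=\tfrac{b_{i-1}+6b_i+b_{i+1}}{8}$, as in (\ref{eq:sollinsys}); the spline passes through $b_i$ at $hi$ only when $b_{i-1}+b_{i+1}=2b_i$. The error does not sink the existence/uniqueness argument, because the consistency of the four equations is already secured by your identity (not by $x_i=b_i$, which you present as the ``hidden reason''), and uniqueness follows since the two derivative equations determine $y_i,z_i$ and either value equation then determines $x_i$. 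But the step that derives (\ref{eq:interpolfun}) from $p(hi)=b_i$ must be removed: (\ref{eq:interpolfun}) needs no derivation at all, since its conditions are precisely the imposed conditions (\ref{eqn_interp_cond}). A minor point: the boundary $3\times 3$ systems are Hermite-type rather than Vandermonde (one row is a derivative condition), but they are indeed nonsingular (determinant $\pm h^2/4$), so your conclusion there stands.
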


%
%

Note that by Proposition~\ref{prop_interp_cond}, there exists a
unique function $b = \mathcal{I}_{\bbold}$ that interpolates the solution of Problem~\ref{prob:disc}.
Then $a$ and $\tau$ are computed from $b$ by using relations~(\ref{eq:relation})
 and~(\ref{con:dynamic-2}), namely we~set $ (\forall s \in \left[ 0, s_{f} \right])$
\[
\begin{array}{l}
a(s) = \displaystyle\frac{b^{\prime}(s)}{2}, \\ [8pt]
\btau(s) = \bd(s)a(s) + \bc(s) b(s) + \bg(s).
\end{array}
\]
Functions $b$, $a$ and $\tau$ are approximate solutions of Problem~\ref{prob:2}.
 Indeed,  (\ref{con:obj-disc})~and~(\ref{con:approx-der}) are approximations of
(\ref{con:obj-2})~and~(\ref{con:der}), moreover, functions  $b$, $a$ and $\tau$ satisfy, by construction, constraints
(\ref{con:dynamic-2})-(\ref{con:interp-2}) for $s \in \left\{ s_{1}, \dots,s_{n}\right\}$ and by continuity,
 (\ref{con:dynamic-2})-(\ref{con:interp-2}) are also approximately satisfied for $s\in\left[ 0, s_{f} \right]$.
By increasing the number of samples $n$, the solutions of
Problem~\ref{prob:disc} become better approximations
 of the solutions of Problem~\ref{prob:2}.
It is reasonable to suppose that, as $n$ approaches to $+\infty$, the solutions of  Problem~\ref{prob:disc}
converge to the solutions
of Problem~\ref{prob:2}. Anyway this convergence property is not
proved in this paper being outside its scope. It can be proved on the
lines of~\citep{Consolini2017converges}, that presents a convergence
result for a related speed planning
problem for an autonomous vehicle. 


\section{Solution algorithms and complexity issues for the generalized problem}
\label{sec:solver}

In this section we present an optimal time complexity algorithm that
solves a specific class of optimization problems. In the
subsequent section we will show that Problem~\ref{prob:2} can be
approximated by a finite dimensional problem that belongs to this
class.
\subsection{Exact algorithm for the solution of some special structured problems}
The problems under consideration have the form
\begin{equation}
\label{eq:probl}
\begin{array}{lll}
\min & g(v_1,\ldots,v_n) & \\ [8pt]
& v_i\leq f_j^{i}(v_{i+1}) & i=1,\ldots,n-1,\\[8pt]
&&j=1,\ldots,r_i \\ [8pt]
& v_{i+1}\leq b_k^{i}(v_{i}) & i=1,\ldots,n-1,\\ [8pt]
&&k=1,\ldots,t_i, \\ [8pt]
 & 0\leq v_i\leq u_i & i=1,\ldots,n,
\end{array}
\end{equation}
where we make the following assumptions.
\begin{assum}
\label{ass:1}
We assume:
\begin{itemize}
\item $g$ monotonic non increasing;
\item $f_j^i$, concave, increasing and $f_j^i(0)> 0$,\\  $i=1,\ldots,n-1,\ \ j=1,\ldots,r_i$;
\item $b_k^i$, concave, increasing and $b_k^i(0)> 0$,\\  $i=1,\ldots,n-1,\ \ k=1,\ldots,t_i$.
\end{itemize}
\end{assum}
The constraints in (\ref{eq:probl}) can be rewritten in compact form as follows:
$$
\begin{array}{ll}
v_i\leq \min\{B_i(v_{i+1}),u_i\} & i=1,\ldots,n-1, \\ [8pt]
v_{i+1}\leq \min\{F_i(v_{i}), u_{i+1}\} & i=1,\ldots,n-1,
\end{array}
$$
where:
$$
\begin{array}{l}
B_i(v_{i+1})=\min_{j=1,\ldots,r_i} f_j^{i}(v_{i+1}), \\ [8pt]
F_i(v_i)=\min_{k=1,\ldots,t_i} b_k^{i}(v_{i}).
\end{array}
$$
Note that $F_i$ and $B_i$ are both concave and increasing over $\mathbb{R}_+$, since they are the minimum of a finite number of functions with the same properties.
We prove that the same holds for $F_i\circ B_i$.
\begin{prop}
$F_i\circ B_i$ is increasing and concave over $\mathbb{R}_+$.
\end{prop}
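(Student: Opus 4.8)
The plan is to use the standard composition fact: a concave increasing function, composed on the outside with a concave increasing function, is again concave and increasing. The only thing that needs a little care is checking that the composition $F_i\circ B_i$ is actually defined on all of $\mathbb{R}_+$, and that monotonicity of $F_i$ is invoked at the right place.

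First I would check the composition is well defined. Since each $f_j^i$ is increasing with $f_j^i(0)>0$, we get $B_i(v)=\min_{j} f_j^i(v)\ge B_i(0)=\min_j f_j^i(0)>0$ for every $v\in\mathbb{R}_+$, so $B_i$ maps $\mathbb{R}_+$ into $(0,\infty)\subseteq\mathbb{R}_+$, which is exactly the domain on which $F_i$ is assumed to be concave and increasing. Hence $F_i\circ B_i$ is defined on all of $\mathbb{R}_+$.

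Monotonicity is immediate: if $0\le x\le y$, then $B_i(x)\le B_i(y)$ because $B_i$ is increasing, and therefore $F_i(B_i(x))\le F_i(B_i(y))$ because $F_i$ is increasing; thus $F_i\circ B_i$ is increasing. For concavity I would fix $x,y\in\mathbb{R}_+$ and $t\in[0,1]$ and chain two estimates: concavity of $B_i$ gives $B_i(tx+(1-t)y)\ge t B_i(x)+(1-t)B_i(y)$; applying the increasing function $F_i$ to both sides preserves the inequality; and concavity of $F_i$ gives $F_i\bigl(t B_i(x)+(1-t)B_i(y)\bigr)\ge t F_i(B_i(x))+(1-t)F_i(B_i(y))$. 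Combining the two yields $F_i(B_i(tx+(1-t)y))\ge t\,F_i(B_i(x))+(1-t)\,F_i(B_i(y))$, which is concavity of $F_i\circ B_i$.

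There is essentially no obstacle here; the proof is a routine two-line chain of inequalities. The only points worth stating explicitly are that $F_i$ must be evaluated only at arguments where it is defined and concave (hence the first step showing $B_i(\mathbb{R}_+)\subseteq\mathbb{R}_+$), and that the monotonicity of $F_i$ is precisely what allows the concavity estimate for $B_i$ to be carried through the outer application of $F_i$.
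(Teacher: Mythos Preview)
Your proof is correct and follows essentially the same approach as the paper: monotonicity is obtained directly from the monotonicity of $F_i$ and $B_i$, and concavity from the same two-step chain of inequalities (concavity of $B_i$, then monotonicity and concavity of $F_i$). Your additional remark that $B_i(\mathbb{R}_+)\subseteq\mathbb{R}_+$ so the composition is well defined is a harmless extra bit of care not made explicit in the paper.
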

\begin{proof}
The fact that $F_i\circ B_i$ is increasing follows immediately from the increasingness of $F_i$ and $B_i$.
For what concerns concavity, $\forall x,y\geq 0,\ \lambda\in [0,1]$:
$$
\begin{array}{l}
F_i\circ B_i(\lambda x+(1-\lambda)y)
= F_i(B_i(\lambda x+(1-\lambda)y))\\ [8pt]
\geq F_i(\lambda B_i(x)+(1-\lambda)B_i(y))
\geq \lambda F_i \circ B_i(x)+(1-\lambda)F_i \circ B_i(y),
\end{array}
$$
where the first inequality is a consequence of the concavity of $B_i$ and the fact that $F_i$ is increasing, while the second inequality
comes from concavity of $F_i$.
\end{proof}
It immediately follows that:
\begin{equation}
\label{eq:fixed}
F_i\circ B_i(x)-x\ \ \mbox{concave},\ \ \  F_i\circ B_i(0)> 0.
\end{equation}
Then, there exists at most one point $\bar{v}_{i+1}>0$ such that $F_i\circ B_i(\bar{v}_{i+1})-\bar{v}_{i+1}=0$.
Similarly, there exists
at most one point $\bar{v}_{i}>0$ such that $B_i\circ F_i(\bar{v}_{i})-\bar{v}_{i}=0$.
Note that $\bar{v}_i, \bar{v}_{i+1}$ are the positive fixed points of $B_i\circ F_i$ and $F_i\circ B_i$, respectively. Alternatively, $(\bar{v}_i, \bar{v}_{i+1})$
is also the optimal solution of the following two-dimensional convex problem:
$$
\begin{array}{lll}
\max & v_i+v_{i+1} & \\ [8pt]
& v_i\leq f_j^{i}(v_{i+1}) & j=1,\ldots,r_i \\ [8pt]
& v_{i+1}\leq b_k^{i}(v_{i}) & k=1,\ldots,t_i.
\end{array}
$$
The following result holds.
\begin{prop}
\label{prop:2}
Under Assumption \ref{ass:1}, the optimal solution of (\ref{eq:probl}) is the component-wise maximum
of its feasible region, i.e., if we denote by $X$ the feasible region, it is the point
${\bf v}^*\in X$ such that ${\bf v}\leq {\bf v}^*$ for all ${\bf v}\in X$.
\end{prop}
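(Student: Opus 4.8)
The plan is to show that the feasible region $X$ of~(\ref{eq:probl}) possesses a componentwise largest element, and that, because $g$ is non-increasing, this element is automatically an optimal solution. Concavity of the $f_j^i$ and $b_k^i$ plays no role here (it is needed only for the complexity analysis of the algorithm); the only property used is monotonicity.

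First I would rewrite the constraints in operator form, as already suggested in the excerpt. Set
\[
T({\bf v})_i=\min\{u_i,\,B_i(v_{i+1}),\,F_{i-1}(v_{i-1})\},
\]
with the $B_i$-term omitted when $i=n$ and the $F_{i-1}$-term omitted when $i=1$. Then $T$ maps the box $\prod_{i=1}^n[0,u_i]$ into itself: each term of the minimum is nonnegative (indeed $B_i(v_{i+1})\geq B_i(0)>0$ and likewise $F_{i-1}(v_{i-1})\geq F_{i-1}(0)>0$ by Assumption~\ref{ass:1}, while $u_i\geq 0$), and $u_i$ itself is one of the terms, so $0\leq T({\bf v})_i\leq u_i$. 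Moreover $T$ is order preserving: if ${\bf v}\leq {\bf w}$ componentwise, then $T({\bf v})\leq T({\bf w})$, since $B_i$ and $F_i$ are increasing and the pointwise minimum preserves the order. Finally, grouping the two families of inequalities in~(\ref{eq:probl}) coordinate by coordinate, one checks that ${\bf v}\in X$ if and only if $0\leq {\bf v}\leq {\bf u}$ and ${\bf v}\leq T({\bf v})$, where ${\bf u}=(u_1,\dots,u_n)$.

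Next I would exhibit the maximal element. Since ${\bf v}=0$ is feasible (because $B_i(0),F_i(0)>0$ and $u_i\geq 0$), the set $X$ is non-empty, and it is bounded above by ${\bf u}$; hence the componentwise supremum ${\bf v}^*$ given by $v_i^*=\sup_{{\bf v}\in X}v_i$ is a well-defined point of $\R^n$ with $0\leq {\bf v}^*\leq {\bf u}$. For every ${\bf v}\in X$ we have ${\bf v}\leq {\bf v}^*$, hence, by monotonicity, ${\bf v}\leq T({\bf v})\leq T({\bf v}^*)$; thus $T({\bf v}^*)$ is an upper bound of $X$, and therefore ${\bf v}^*\leq T({\bf v}^*)$. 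Together with $0\leq {\bf v}^*\leq {\bf u}$ this gives ${\bf v}^*\in X$, so ${\bf v}^*$ is the componentwise maximum of $X$. (This is precisely the Knaster--Tarski argument for the greatest post-fixed point of a monotone self-map of a complete lattice, specialized to the box $\prod_i[0,u_i]$; phrased this way it uses no continuity or closedness of $X$.)

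To finish, since $g$ is monotonic non-increasing and ${\bf v}\leq {\bf v}^*$ for every feasible ${\bf v}$, we obtain $g({\bf v}^*)\leq g({\bf v})$ for all ${\bf v}\in X$, so ${\bf v}^*$ solves~(\ref{eq:probl}). The one step that carries the weight is the reformulation in the second paragraph --- recognizing $X$ as the set of post-fixed points of the monotone map $T$; everything else is then immediate. In particular, this route avoids any compactness argument, which would be delicate here because a concave increasing function on $\R_+$ need not be continuous at $0$, so $X$ need not be closed.
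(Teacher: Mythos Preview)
Your argument is correct. The paper does not actually give a proof of this proposition; it simply refers the reader to \citep{minSCL17,Nagy2017}. Your write-up is therefore a genuine contribution rather than a re-derivation: you recognize that $X$ is exactly the set of post-fixed points of the monotone self-map $T$ on the box $\prod_i[0,u_i]$, and then apply the Knaster--Tarski mechanism (take the coordinatewise supremum, push it through $T$, conclude it is itself a post-fixed point). That is the natural and cleanest way to produce the maximal feasible element in this setting.

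Two remarks worth keeping. First, your observation that concavity is irrelevant here is accurate: only the monotonicity of the $f_j^i$ and $b_k^i$ (hence of $B_i$, $F_{i-1}$, and $T$) is used, together with $f_j^i(0),b_k^i(0)>0$ to guarantee $0\in X$; concavity enters only later, in the analysis of Algorithm~\ref{Alg:FindProfile3}. Second, your point about avoiding compactness is also well taken: a concave increasing function on $[0,\infty)$ can fail to be right-continuous at $0$, so $X$ need not be closed, and an argument that relied on taking limits of maximizing sequences would have a gap. The lattice-theoretic route sidesteps this entirely.

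One cosmetic simplification: since $u_i$ already appears inside the minimum defining $T({\bf v})_i$, the condition ${\bf v}\leq T({\bf v})$ alone implies ${\bf v}\leq{\bf u}$, so the feasibility characterization can be stated simply as $0\leq{\bf v}$ and ${\bf v}\leq T({\bf v})$.
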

\begin{proof}
See \citep{minSCL17,Nagy2017}.
\end{proof}
We consider Algorithm \ref{Alg:FindProfile3} for the solution of problem (\ref{eq:probl}).
\begin{algorithm}
\caption{\label{Alg:FindProfile3} Forward-Backward algorithm for the solution of the problem.}
\SetKwInOut{Data}{Data}
\Data{${\bf u}\in \mathbb{R}_+^n$;}
Set $\bar{{\bf u}}={\bf u}$\;
\tcc{ {\em Forward phase}}\;
\ForEach{$i\in \{1,\ldots,n-1\}$}
{
Compute the nonnegative fixed points $\bar{v}_i$ and $\bar{v}_{i+1}$ for $B_i\circ F_i$ and $F_i\circ B_i$, respectively
(if they do not exist, set $\bar{v}_i=+\infty$, $\bar{v}_{i+1}=+\infty$)\;
Set $\bar{u}_i=\min\{\bar{u}_i, B_i(\bar{u}_{i+1}), \bar{v}_i\}$\;
Set $\bar{u}_{i+1}=\min\{\bar{u}_{i+1}, F_i(\bar{u}_{i}), \bar{v}_{i+1}\}$\;
}
\tcc{ {\em Backward phase}}\;
\ForEach{$i \in \{n-1,\ldots,1\}$}
{
Set $\bar{u}_i=\min\{B_i(\bar{u}_{i+1}),\bar{u}_i\}$\;
}
\end{algorithm}
The algorithm is correct, as stated in the following proposition.
\begin{prop}
Algorithm \ref{Alg:FindProfile3} returns the optimal solution ${\bf v}^*$ of problem (\ref{eq:probl}).
\end{prop}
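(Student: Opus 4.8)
The plan is to invoke Proposition~\ref{prop:2}, which identifies ${\bf v}^*$ with the component-wise maximum of the feasible region $X$. It then suffices to prove two things about the vector $\bar{\bf u}$ returned by Algorithm~\ref{Alg:FindProfile3}: that it dominates every feasible point, and that it is itself feasible. The only structural fact I will use about the fixed points is that, by~\eqref{eq:fixed} together with continuity, $\{x\ge 0:\ B_i\circ F_i(x)\ge x\}=[0,\bar v_i]$ (and equals $\mathbb{R}_+$ when no positive fixed point exists), and symmetrically $\{x\ge 0:\ F_i\circ B_i(x)\ge x\}=[0,\bar v_{i+1}]$; this is just the statement that a concave function positive at the origin stays positive exactly up to its first zero.

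\emph{Domination.} First I would show, as an invariant maintained after every single assignment of the algorithm, that $v_i\le\bar u_i$ for all $i$ whenever ${\bf v}\in X$. This holds at initialization because of the box constraints $v_i\le u_i$. For the inductive step, any feasible ${\bf v}$ satisfies $v_i\le B_i(v_{i+1})$ and $v_{i+1}\le F_i(v_i)$, hence $v_i\le B_i(F_i(v_i))$, so $v_i\le\bar v_i$, and symmetrically $v_{i+1}\le\bar v_{i+1}$. Using these two bounds, the current invariant, and the monotonicity of $B_i$ and $F_i$, each argument of the $\min$ in every update ($\bar u_i$, $B_i(\bar u_{i+1})$, $\bar v_i$, and their $(i+1)$-analogues) is at least $v_i$ (resp.\ $v_{i+1}$), so the updated entry still dominates ${\bf v}$. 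Thus $\bar{\bf u}\ge{\bf v}$ for all ${\bf v}\in X$ at termination, and moreover $0\le\bar u_i\le u_i$, since $\bar u_i$ only decreases from $u_i$ and every $\min$ argument is non-negative (because $B_i(0),F_i(0)>0$).

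\emph{Feasibility.} Next I would track which inequalities hold after each phase. A bookkeeping argument on the indices shows that at the end of the forward phase, for every $i$, one has $\bar u_{i+1}\le F_i(\bar u_i)$ and $\bar u_{i+1}\le\bar v_{i+1}$: the inequality $\bar u_{i+1}\le F_i(\bar u_i)$ is created by the second assignment of forward iteration $i$ and then preserved, because $\bar u_i$ is not touched again during the forward phase while $\bar u_{i+1}$ can only decrease; the bound $\bar u_{i+1}\le\bar v_{i+1}$ is produced at the same iteration and only reinforced afterwards. The backward phase then produces, at iteration $i$, the complementary inequality $\bar u_i\le B_i(\bar u_{i+1})$, and this survives to the end because when it is imposed $\bar u_{i+1}$ already holds its final value and $\bar u_i$ is not modified later.

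The crux of the argument, and what I expect to be the main obstacle, is that the backward updates must not destroy the forward inequalities $\bar u_{i+1}\le F_i(\bar u_i)$, since lowering $\bar u_i$ also shrinks the right-hand side $F_i(\bar u_i)$. To handle it, fix backward iteration $i$: if $\bar u_i$ is left unchanged there is nothing to check (and $\bar u_{i+1}$, already final, has only decreased relative to the forward phase, which only helps); otherwise $\bar u_i$ is replaced by $B_i(\bar u_{i+1})$, and then $F_i(\bar u_i)=F_i\circ B_i(\bar u_{i+1})\ge\bar u_{i+1}$ precisely because $\bar u_{i+1}\le\bar v_{i+1}$ — inherited from the forward phase, with $\bar u_{i+1}$ having only decreased since — lies in the interval on which $F_i\circ B_i(x)-x\ge 0$. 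Hence after the backward phase $\bar u_i\le B_i(\bar u_{i+1})$ and $\bar u_{i+1}\le F_i(\bar u_i)$ for all $i$, together with $0\le\bar u_i\le u_i$, i.e.\ $\bar{\bf u}\in X$. Combining this with the domination property, $\bar{\bf u}$ is the component-wise maximum of $X$, which by Proposition~\ref{prop:2} equals ${\bf v}^*$.
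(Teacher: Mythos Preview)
Your proof is correct and follows essentially the same approach as the paper's: establish the domination invariant $\bar{\mathbf u}\ge\mathbf v^*$ throughout, then prove feasibility by showing that the backward update preserves $\bar u_{i+1}\le F_i(\bar u_i)$ via the key fact that $\bar u_{i+1}\le\bar v_{i+1}$ (set in the forward phase) forces $F_i\circ B_i(\bar u_{i+1})\ge\bar u_{i+1}$. Your treatment is somewhat more explicit than the paper's---you spell out the domination argument that the paper only states, and you do a case split (unchanged vs.\ replaced) where the paper writes the single identity $F_i(\bar u_i)=\min\{F_i(\bar u_i^{old}),F_i\circ B_i(\bar u_{i+1})\}$---but the substance is the same.
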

\begin{proof}
We first remark that at each iteration $\bar{{\bf u}}\geq {\bf v}^*$ holds.
If a fixed point $\bar{v}_{i+1}$ for $F_i\circ B_i$ exists, then
after the backward propagation, $\bar{u}_{i+1}\leq \bar{v}_{i+1}$, and $\bar{u}_i=\min\{\bar{u}^{old}_i,B_i(\bar{u}_{i+1})\}$, where
$\bar{u}_i^{old}$ denotes the upper bound for $v_i$ after the forward phase.
We show that
\begin{equation}
\label{eq:condit}
F_i(\bar{u}_i)=\min\{F_i(\bar{u}^{old}_i), F_i\circ B_i(\bar{u}_{i+1})\}\geq \bar{u}_{i+1}.
\end{equation}
If this is true for all $i$, then the the point $\bar{{\bf u}}$ at the end of the backward phase is a feasible solution of (\ref{eq:probl}).
Indeed, by definition of $\bar{u}_i$ in the backward phase, $\forall i$
$$
\bar{u}_i\leq B_i(\bar{u}_{i+1}),
$$
while by (\ref{eq:condit}), $\forall i$
$$
\bar{u}_{i+1}\leq F_i(\bar{u}_{i}),
$$
so that $\bar{{\bf u}}$ is feasible for (\ref{eq:probl}).
Since $\bar{{\bf u}}\geq {\bf v}^*$ holds and $g$ is monotone non increasing, we have that $\bar{{\bf u}}$ is the optimal solution of (\ref{eq:probl}).
We only need to prove that (\ref{eq:condit}) is true.
Note that $\bar{u}^{old}_i$ is the result of the first forward propagation, so that
$F_i(\bar{u}^{old}_i)\geq \bar{u}^{old}_{i+1}\geq \bar{u}_{i+1}$. Thus, we need to prove that $ F_i\circ B_i(\bar{u}_{i+1})\geq \bar{u}_{i+1}$ with
$\bar{u}_{i+1}\leq \bar{v}_{i+1}$.
In view of (\ref{eq:fixed}), if the fixed point $\bar{v}_{i+1}$ for $F_i\circ B_i$ exists, it is the unique nonnegative root of
$F_i\circ B_i(x)-x$ and
$$
F_i\circ B_i(x)-x>0\ \ \ \forall x\leq \bar{v}_{i+1},
$$
from which the result is proved. Otherwise, if no fixed point exists,
$$
F_i\circ B_i(x)-x>0\ \ \ \forall x\in \mathbb{R}_+,
$$
form which the result is still proved.
\end{proof}
Under a given condition, Algorithm \ref{Alg:FindProfile3} can be further simplified.
\begin{remark}
If $F_i$ and $B_i$, $i=1,\ldots,n$, fulfill the so called {\em superiority condition}, i.e.
$$
F_i(x), B_i(x)> x \ \ \ \forall x\in \mathbb{R}_+,
$$
then $\bar{v}_i,\bar{v}_{i+1}=+\infty$ and the forward phase can be reduced to
$$
\bar{u}_{i+1}=\min\{\bar{u}_{i+1}, F_i(\bar{u}_{i})\}.
$$
\end{remark}
\subsection{Solving the subproblems in the forward phase and complexity results}
We first remark that
$$
\begin{array}{l}
\bar{u}_i=\min\{\bar{u}_i, B_i(\bar{u}_{i+1}), \bar{v}_i\}, \\ [8pt]
\bar{u}_{i+1}=\min\{\bar{u}_{i+1}, F_i(\bar{u}_{i}), \bar{v}_{i+1}\},
\end{array}
$$
defined in the forward phase of Algorithm \ref{Alg:FindProfile3} are the solution of the two-dimensional convex optimization problem
\begin{equation}
\label{eq:convex sub}
\begin{array}{lll}
\max &\ v_i+v_{i+1} & \\ [8pt]
& v_i\leq f_j^{i}(v_{i+1})\quad  j=1,\ldots,r_i \\ [8pt]
& v_{i+1}\leq b_k^{i}(v_{i})\quad  k=1,\ldots,t_i \\ [8pt]
& 0\leq v_i\leq f_{r_i+1}^i(v_{i+1})\equiv \bar{u}_i & \\ [8pt]
& 0\leq v_{i+1}\leq b_{t_i+1}^i(v_{i})\equiv \bar{u}_{i+1}. &
\end{array}
\end{equation}
Alternatively, $\bar{u}_i, \bar{u}_{i+1}$ can also be detected as fixed points of $B_i^{\bar{{\bf u}}}\circ F_i^{\bar{{\bf u}}}$ and $F_i^{\bar{{\bf u}}}\circ B_i^{\bar{{\bf u}}}$, respectively, where
$$
\begin{array}{l}
F_i^{\bar{{\bf u}}}(x)=\min\{\bar{u}_i,F_i(x)\},\\[8pt]
B_i^{\bar{{\bf u}}}(x)=\min\{\bar{u}_{i+1},B_i(x)\}.
\end{array}
$$
Although any convex optimization or any fixed point solver could be exploited for detecting these values, we propose the simple Algorithm \ref{Alg:convex}, which turns out to be quite effective in practice.
We denote by
$$
[F_i]^{-1}(x)=\max_{j=1,\ldots,r_i} \{[f_j^i]^{-1}(x)\}.
$$
Note that $f_j^i$ increasing and concave implies that $[f_j^i]^{-1}$ is increasing and convex and, consequently, $[F_i]^{-1}$ is increasing and convex .
\begin{algorithm}
\caption{\label{Alg:convex} Algorithm for the computation of the new values $\bar{u}_i$ and $\bar{u}_{i+1}$.}
Let $\bar{x}_1=\bar{u}_i$\;
Set $\bar{y}=-\infty$, $\bar{z}=+\infty$, $h=1$\;
\While{$\bar{y}< \bar{z}$}{
Set $\bar{y}=B_i^{\bar{{\bf u}}}(\bar{x}_h)$ and $\bar{k}\in \{1,\ldots,t_i+1\}\ :\ b_{\bar{k}}^i(\bar{x}_h)=\bar{y}$\;
Set $\bar{z}=[F_i]^{-1}(\bar{x}_h)$ and $\bar{j}\in \{1,\ldots,r_i\}\ :\ [f_{\bar{j}}^i]^{-1}(\bar{x}_h)=\bar{z}$\;
Let $\bar{x}_{h+1}$ be the solution of the one-dimensional equation $f_{\bar{j}}^i(b_{\bar{k}}^i(x))=x$\;
Set $h=h+1$\;
}
\Return{$(\bar{u}_i,\bar{u}_{i+1})=(\bar{x}_h,\bar{y})$}\;
\end{algorithm}
We illustrate how the algorithm works through an example.
\begin{example}
Let
\[
\begin{array}{ll}
b_1^i(x)=\frac{3}{2}x+2, & b_2^i(x)=x+3,\\ [8pt]
b_3^i(x)=\frac{1}{2}x+5, & b_4^i(x)=8.
\end{array}
\]
and
\[
\begin{array}{ll}
[f_1^i]^{-1}(x)=x-1,&  [f_2^i]^{-1}(x)=\frac{9}{2}x-8,\\  [8pt]
[f_3^i]^{-1}(x)=5x-10.
\end{array}
\]
Moreover, let $u_i=u_{i+1}=8$. Then, we initially set $\bar{x}_1=8$.
\newline
In the first iteration we have
\begin{align*}
\bar{y} &= B_i^{\bar{{\bf u}}}(\bar{x}_1)=\min_{k=1,\ldots,4}\{b_k^i(\bar{x}_1)\}=\min\{14,11,9,8\}=8,
\end{align*}
with $\bar{k}=4$, and
\begin{align*}
\bar{z} &=[F_i]^{-1}(\bar{x}_1)=\max_{j=1,\ldots,3}\{[f_j^i]^{-1}(\bar{x}_1)\}=\max\{7,28,30\}=30,
\end{align*}
with $\bar{j}=3$. Then, $\bar{x}_2$ is the solution of the equation $8=5x-10$, i.e., $\bar{x}_2=\frac{18}{5}$
(See also Figure \ref{fig:h1}).
\begin{figure}[!h]
\centering
\includegraphics[width=0.6\columnwidth]{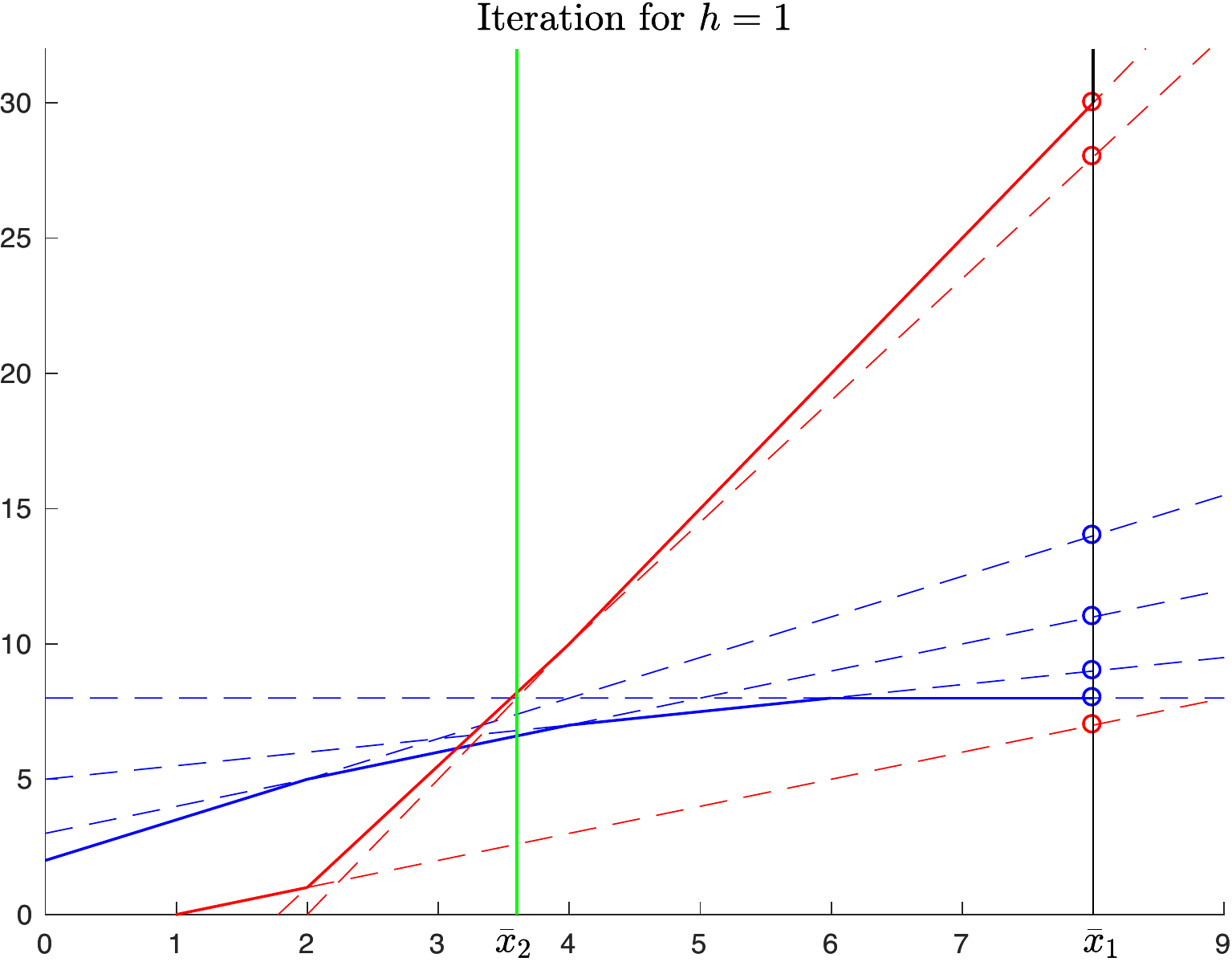}
\caption{\label{fig:h1}The first step of Algorithm \ref{Alg:convex}. The red lines represent the linear function $[f_{\bar{j}}^{i}]^{-1}$ 
while the blue ones are the functions $b_{\bar{k}^{i}}$. The green line represents 
the solution of the one dimensional equation  $f_{\bar{j}}^i(b_{\bar{k}}^i(x))=x$.}
\end{figure}
In the second iteration we have
\begin{align*}
\bar{y} &=B_i^{\bar{{\bf u}}}(\bar{x}_2)=\min_{k=1,\ldots,4}\{b_k^i(\bar{x}_2)\}=\min\left\{\frac{37}{5},\frac{33}{5},\frac{34}{5},8\right\}=\frac{33}{5},
\end{align*}
with $\bar{k}=2$, and
\begin{align*}
\bar{z} &=[F_i]^{-1}(\bar{x}_2)=\max_{j=1,\ldots,3}\{[f_j^i]^{-1}(\bar{x}_2)\}  =\max\left\{\frac{13}{5},\frac{41}{5},8\right\}=\frac{41}{5},
\end{align*}
with $\bar{j}=2$. Then, $\bar{x}_3$ is the solution of the equation $x+3=\frac{9}{2}x-8$, i.e., $\bar{x}_3=\frac{22}{7}$ (See also Figure \ref{fig:h2}).
\begin{figure}[!h]
\centering
\includegraphics[width=0.6\columnwidth]{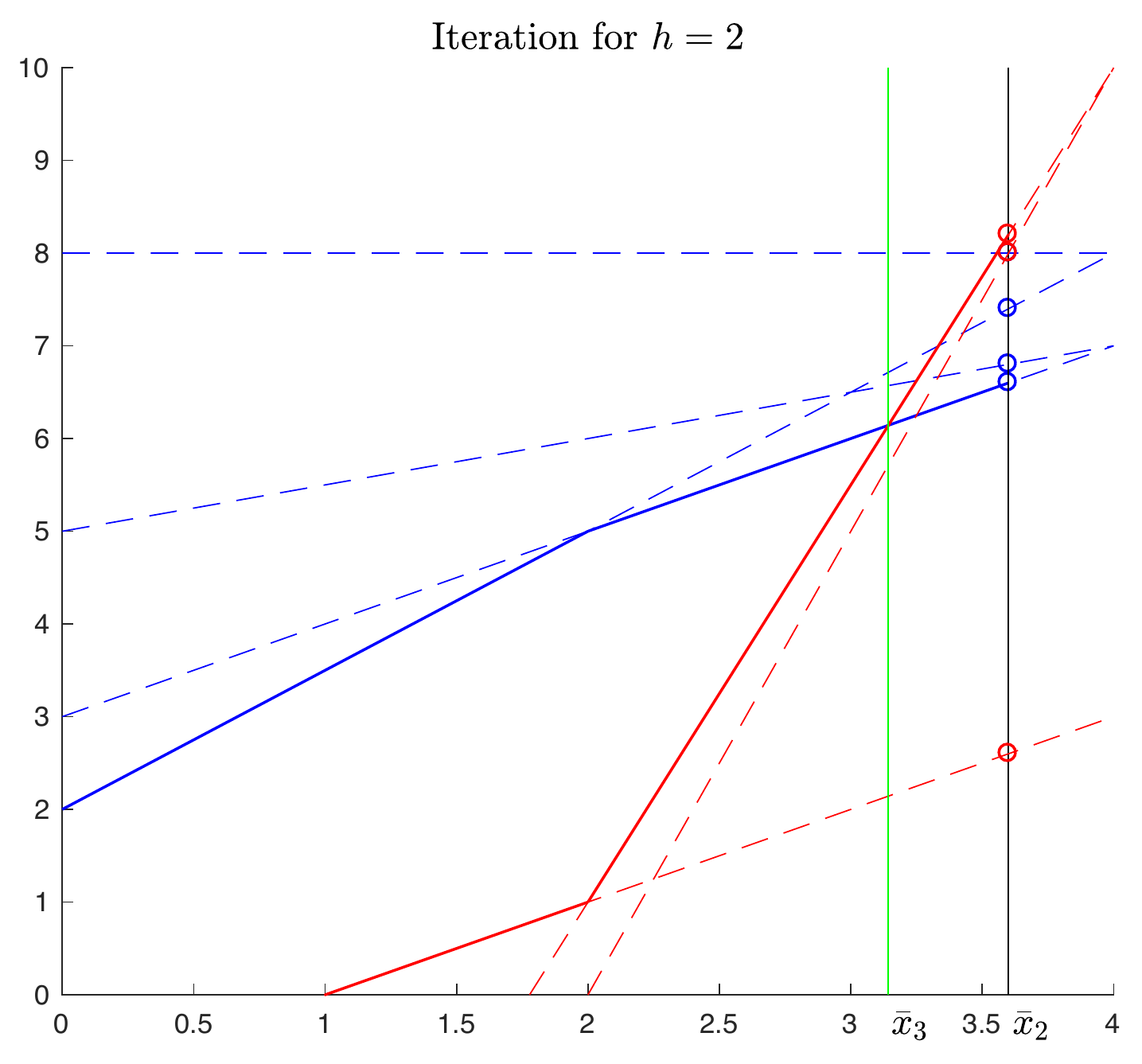}
\caption{\label{fig:h2} The second step of Algorithm~\ref{Alg:convex}.}
\end{figure}

In the third iteration we have
\begin{align*}
\bar{y} &=B_i^{\bar{{\bf u}}}(\bar{x}_3)=\min_{k=1,\ldots,4}\{b_k^i(\bar{x}_3)\} =\min\left\{\frac{47}{7},\frac{43}{7},\frac{46}{7},8\right\}=\frac{43}{7},
\end{align*}
with $\bar{k}=2$, and
\begin{align*}
\bar{z} & =[F_i]^{-1}(\bar{x}_3)=\max_{j=1,\ldots,3}\{[f_j^i]^{-1}(\bar{x}_3)\}=\max\left\{\frac{15}{7},\frac{43}{7},\frac{40}{7}\right\}=\frac{43}{7},
\end{align*}
with $\bar{j}=2$. Then, $\bar{x}_4=\bar{x}_3$ and since $\bar{y}=\bar{z}$ the algorithm stops and returns the optimal solution $\left(\frac{22}{7},\frac{43}{7}\right)$ (See also Figure \ref{fig:h3}).
\begin{figure}[!h]
\centering
\includegraphics[width=0.6\columnwidth]{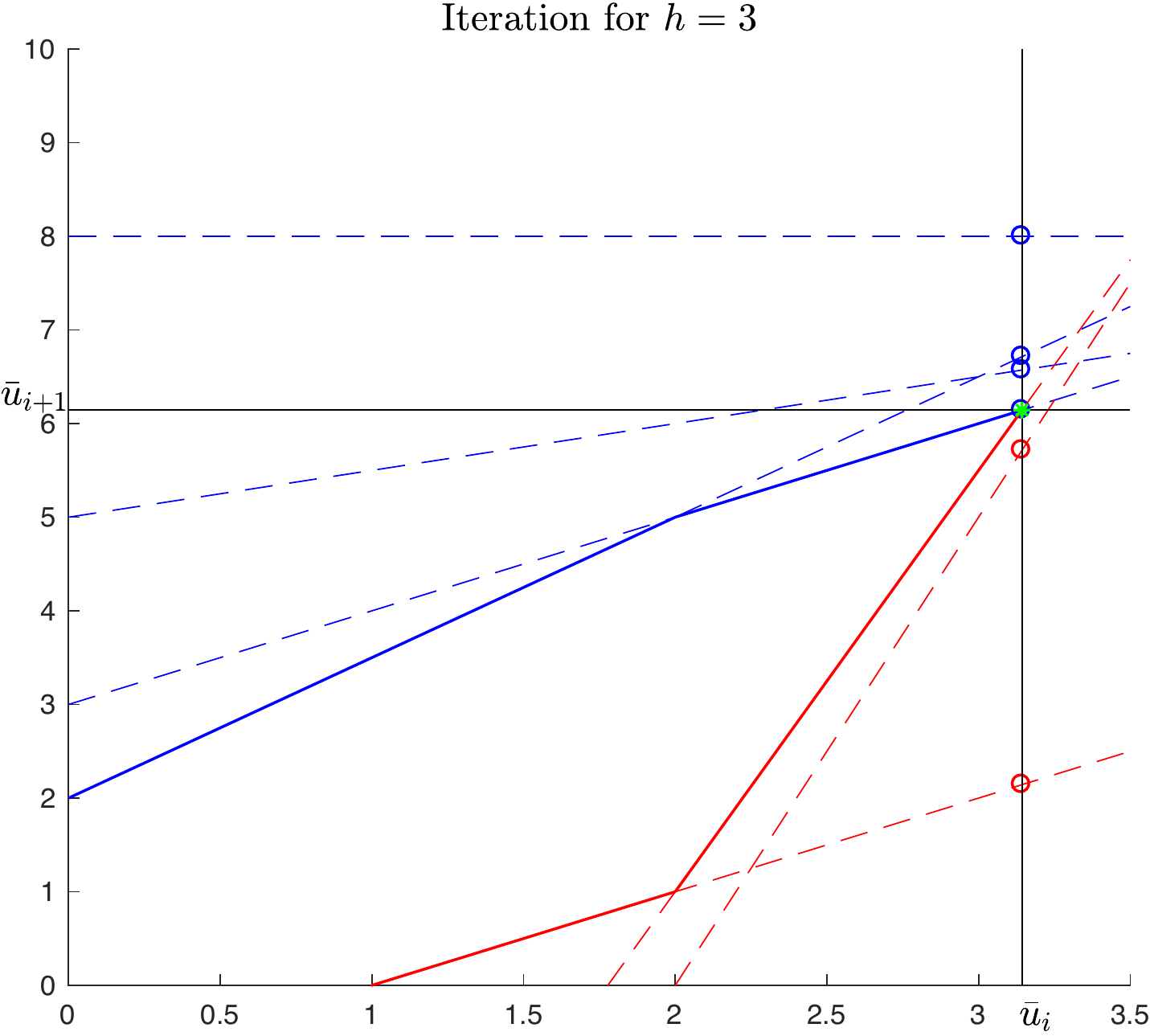}
\caption{The last step of Algorithm~\ref{Alg:convex}. The green marker represents value $(u_{i},u_{i+1}) =(\bar{x}_{h},\bar{y})$ \label{fig:h3} returned.}
\end{figure}
\end{example}

If we denote by $C_b$ the time needed to evaluate one function $b_k^i$, by $C_f$ the time needed to evaluate one function
$[f_j^i]^{-1}$, and by $C_{eq}$ the time needed to solve a one-dimensional equation $f_{j}^i(b_{k}^i(x))=x$, we can state the complexity of Algorithm \ref{Alg:convex}.
Before we need to prove one lemma.
\begin{lem}
\label{lem:1}
The sequence  $\{\bar{x}_h\}$ is strictly decreasing.
\end{lem}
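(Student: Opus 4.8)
The plan is to show that each new iterate $\bar{x}_{h+1}$ is strictly smaller than the current one $\bar{x}_h$, as long as the while-loop has not yet terminated (i.e.\ as long as $\bar{y}<\bar{z}$ at step $h$). Recall that at iteration $h$ the algorithm sets $\bar{y}=B_i^{\bar{{\bf u}}}(\bar{x}_h)=b_{\bar k}^i(\bar{x}_h)$ and $\bar{z}=[F_i]^{-1}(\bar{x}_h)=[f_{\bar j}^i]^{-1}(\bar{x}_h)$, and then defines $\bar{x}_{h+1}$ as the solution of the one-dimensional fixed-point equation $f_{\bar j}^i(b_{\bar k}^i(x))=x$. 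First I would record the elementary monotonicity facts we are allowed to use: each $b_k^i$ is increasing and concave with $b_k^i(0)>0$, hence $B_i^{\bar{{\bf u}}}$ is increasing and concave; each $[f_j^i]^{-1}$ is increasing and convex, hence $[F_i]^{-1}$ is increasing and convex; consequently $f_{\bar j}^i\circ b_{\bar k}^i$ is increasing and concave and, by the same argument as in~\eqref{eq:fixed}, $x\mapsto f_{\bar j}^i(b_{\bar k}^i(x))-x$ is concave with a strictly positive value at $0$, so its positive root $\bar{x}_{h+1}$ (if it exists) is unique and the function is strictly positive on $[0,\bar{x}_{h+1})$ and strictly negative beyond it.

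The core step is to translate the loop condition $\bar{y}<\bar{z}$ into the statement that $\bar{x}_h$ lies strictly to the right of that root. The condition $\bar{y}<\bar{z}$ reads $b_{\bar k}^i(\bar{x}_h)<[f_{\bar j}^i]^{-1}(\bar{x}_h)$. Applying the increasing function $f_{\bar j}^i$ to both sides and using $f_{\bar j}^i([f_{\bar j}^i]^{-1}(\bar{x}_h))=\bar{x}_h$, this becomes $f_{\bar j}^i(b_{\bar k}^i(\bar{x}_h))<\bar{x}_h$, i.e.\ $f_{\bar j}^i(b_{\bar k}^i(\bar{x}_h))-\bar{x}_h<0$. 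By the concavity/sign analysis above, this forces $\bar{x}_h>\bar{x}_{h+1}$, which is exactly the desired strict decrease. (I would also note that the root $\bar{x}_{h+1}$ does exist in this case, since $f_{\bar j}^i(b_{\bar k}^i(\cdot))-(\cdot)$ is positive at $0$ and negative at $\bar{x}_h$, so by continuity it vanishes in between, and uniqueness comes from concavity.) One point requiring a line of care: $\bar{j}$ and $\bar{k}$ are the indices attaining the max in $[F_i]^{-1}$ and the min in $B_i^{\bar{{\bf u}}}$ \emph{at the point $\bar{x}_h$}, so the identities $\bar{y}=b_{\bar k}^i(\bar{x}_h)$ and $\bar{z}=[f_{\bar j}^i]^{-1}(\bar{x}_h)$ hold with equality at $\bar{x}_h$, which is all the argument uses.

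The main obstacle I anticipate is purely bookkeeping rather than conceptual: making sure the indices $\bar j,\bar k$ are pinned to the current iterate (so that the equalities, not just inequalities, for $\bar y$ and $\bar z$ are available), and handling the degenerate possibility that $b_{\bar k}^i$ is the constant cap $b_{t_i+1}^i\equiv\bar{u}_{i+1}$ or $f_{\bar j}^i$ involves the cap $f_{r_i+1}^i\equiv\bar{u}_i$ — in those cases the composed function may be constant or the root may sit at a boundary, but concavity and the strict positivity at $0$ still give the needed conclusion, since $\bar y<\bar z$ already rules out $\bar{x}_h$ being at or below the root. A clean way to finish is simply: the loop runs at step $h$ only if $f_{\bar j}^i(b_{\bar k}^i(\bar{x}_h))<\bar{x}_h$; the strict concavity argument then yields $\bar{x}_{h+1}<\bar{x}_h$; hence $\{\bar{x}_h\}$ is strictly decreasing.
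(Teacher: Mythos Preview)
Your proposal is correct and follows essentially the same line as the paper's proof. The only cosmetic difference is that the paper tracks the concave function $b_{\bar k}^i(x)-[f_{\bar j}^i]^{-1}(x)$ (positive at $0$, negative at $\bar x_h$, hence its unique root lies in $(0,\bar x_h)$), while you apply $f_{\bar j}^i$ first and track the equivalent concave function $f_{\bar j}^i(b_{\bar k}^i(x))-x$; since $f_{\bar j}^i$ is strictly increasing these two equations have the same root $\bar x_{h+1}$, so the arguments coincide.
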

\begin{proof}
If the algorithm does not stop, then $\bar{y}<\bar{z}$, or, equivalently
$$
b_{\bar{k}}^i(\bar{x}_h)- [f_{\bar{j}}^i]^{-1}(\bar{x}_h)<0.
$$
Since $[f_{\bar{j}}^i]^{-1}$ is convex,
$b_{\bar{k}}^i(x)- [f_{\bar{j}}^i]^{-1}(x)$ is concave. Moreover, $b_{\bar{k}}^i(0)- [f_{\bar{j}}^i]^{-1}(0)>0$. Then,
there exists a unique value $x\in (0,\bar{x}_h)$ such that
$$
b_{\bar{k}}^i(x)- [f_{\bar{j}}^i]^{-1}(x)=0.
$$
Such value, lower than $\bar{x}_h$, is also the solution $\bar{x}_{h+1}$ of the one-dimensional equation $f_{\bar{j}}^i(b_{\bar{k}}^i(x))=x$.
\end{proof}

Now we are ready to prove the complexity result.
\begin{prop}
Algorithm \ref{Alg:convex} has complexity $O(r_i t_i(t_i C_b+ r_i C_f + C_{eq}))$.
\end{prop}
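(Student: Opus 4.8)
The plan is to bound the complexity of Algorithm~\ref{Alg:convex} by multiplying the cost of a single pass through the \texttt{while} loop by the number of passes. The per-iteration cost is immediate: evaluating $\bar y=B_i^{\bar{{\bf u}}}(\bar x_h)=\min_{k=1,\dots,t_i+1}b_k^i(\bar x_h)$ takes $t_i+1$ evaluations of the $b_k^i$, i.e. $O(t_iC_b)$; evaluating $\bar z=[F_i]^{-1}(\bar x_h)=\max_{j=1,\dots,r_i}[f_j^i]^{-1}(\bar x_h)$ takes $O(r_iC_f)$; and solving the one-dimensional equation $f_{\bar j}^i(b_{\bar k}^i(x))=x$ takes $O(C_{eq})$. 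So one iteration costs $O(t_iC_b+r_iC_f+C_{eq})$, and it remains to show that the loop performs $O(r_it_i)$ iterations.

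For that, I would argue that the pair of active indices $(\bar j,\bar k)\in\{1,\dots,r_i\}\times\{1,\dots,t_i+1\}$ selected in a non-terminal iteration uniquely determines the next iterate $\bar x_{h+1}$. Indeed, when the loop does not stop we have $\bar y<\bar z$, so (by the proof of Lemma~\ref{lem:1}) $\bar x_{h+1}$ is the \emph{unique} positive solution of $f_{\bar j}^i(b_{\bar k}^i(x))=x$: the function $x\mapsto f_{\bar j}^i(b_{\bar k}^i(x))-x$ is concave — since $f_{\bar j}^i\circ b_{\bar k}^i$, the composition of the concave nondecreasing $f_{\bar j}^i$ with the concave $b_{\bar k}^i$, is concave — and strictly positive at $x=0$, because $f_{\bar j}^i(b_{\bar k}^i(0))\ge f_{\bar j}^i(0)>0$; hence it has at most one positive zero. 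Consequently $\bar x_{h+1}$ is a function of $(\bar j,\bar k)$ alone. Since by Lemma~\ref{lem:1} the sequence $\{\bar x_h\}$ is strictly decreasing, the iterates produced in distinct iterations are distinct, so the same pair $(\bar j,\bar k)$ cannot be selected in two distinct non-terminal iterations. Therefore the number of iterations is at most $r_i(t_i+1)+1=O(r_it_i)$.

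Multiplying the two bounds gives the claimed complexity $O\big(r_it_i(t_iC_b+r_iC_f+C_{eq})\big)$. I expect the only delicate point to be the injectivity step: it hinges on the uniqueness of the positive fixed point of $f_{\bar j}^i\circ b_{\bar k}^i$ — guaranteed by concavity together with $f_{\bar j}^i(0)>0$ and $b_{\bar k}^i(0)\ge 0$ — and on the strict monotonicity of $\{\bar x_h\}$ from Lemma~\ref{lem:1}; the rest is a routine count.
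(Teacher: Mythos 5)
Your proof is correct and takes essentially the same route as the paper: it multiplies the per-iteration cost $O(t_iC_b+r_iC_f+C_{eq})$ by an $O(r_it_i)$ bound on the number of iterations, the latter following from the strict decrease of $\{\bar{x}_h\}$ (Lemma~\ref{lem:1}) and the fact that each pair $(\bar{j},\bar{k})$, i.e.\ each equation $f_{\bar{j}}^i(b_{\bar{k}}^i(x))=x$, can be used at most once. Your explicit justification of that injectivity step (uniqueness of the positive solution via concavity of $f_{\bar{j}}^i\circ b_{\bar{k}}^i$ and positivity at $0$) is a detail the paper leaves implicit, but it is the same argument.
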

\begin{proof}
In view of Lemma \ref{lem:1} the sequence $\{\bar{x}_h\}$ is decreasing, which means that an equation
$f_{j}^i(b_{k}^i(x))=x$, $j=1,\ldots,r_i$, $k=1,\ldots,t_i+1$, is solved at most once. Thus, the number of iterations is at most $(t_i+1)r_i$.
At each iteration we need to evaluate $B_i^{\bar{{\bf u}}}$ ($(t_i+1)C_b$ operations), evaluate $[F_i]^{-1}$ ($r_i C_f$ operations), and solve
a one-dimensional equation ($C_{eq}$ operations).
\end{proof}

Algorithm \ref{Alg:convex} and the related complexity result can be improved when the functions $b_k^i$ and $f_j^i$ are linear ones. The linear case
is particularly relevant in our context since a suitable discretization of Problem \ref{prob:2} will turn out to fall into this case, as we will see in Section \ref{sec:Disc-Problem}.
Algorithm \ref{Alg:linear} is a variant of Algorithm \ref{Alg:convex} for the linear case.
In the initialization phase of Algorithm \ref{Alg:linear} the slopes $m_k$, $k=1,\ldots, t_i+1$, of the linear functions $b_k^i$ are ordered in a decreasing way, i.e.,
$$
m_k>m_{k+1},\ \ \ k=1,\ldots,t_i,
$$
while
the slopes $\eta_j$, $j=1,\ldots, r_i$, of the linear functions $[f_j^i]^{-1}$ are ordered in a decreasing way, i.e.,
$$
\eta_j<\eta_{j+1},\ \ \ j=1,\ldots,r_i-1.
$$

Note that, in case of two linear functions with the same slope, one of the two can be eliminated since it gives rise to a redundant constraint.
The pointer $\xi$ is updated in such a way that at each iteration it identifies the index $\bar{k}$ such that
$\bar{y}=B_i^{\bar{{\bf u}}}(\bar{x}_h)=b_{\bar{k}}^i(\bar{x}_h)$, without the need of computing the value of {\em all} the functions
$b_k^i$ (as, instead, required in Algorithm \ref{Alg:convex}) and, thus, saving the $O(t_i)$ time required by this computation. Similarly,
the pointer $\phi$ is updated in such a way that at each iteration it identifies the index $\bar{j}$ such that
$\bar{z}=[F_i]^{-1}(\bar{x}_h)=[f_{\bar{j}}^i]^{-1}(\bar{x}_h)$.
\begin{algorithm}
\caption{\label{Alg:linear} Algorithm for the computation of the new values $\bar{u}_i$ and $\bar{u}_{i+1}$ in the linear case.}
Order the slopes of the linear functions $b_k^i$, $k=1,\ldots,t_i+1$, in a decreasing way\;
Order the slopes of the linear functions $[f_j^i]^{-1}$,
$j=1,\ldots,r_i$, in an increasing way\;
Remove redundant constraints and update $t_i$ and $r_i$ accordingly\;
Set $\xi=t_i+1$ and $\phi=r_i$\;
Let $\bar{x}_1=\bar{u}_i$\;
Set $\bar{y}=-\infty$, $\bar{z}=+\infty$, $h=1$\;
\While{$\bar{y}< \bar{z}$}{
\While{$\xi>1$ and $b_{\xi-1}^i(\bar{x}_h)<b_{\xi}^i(\bar{x}_h)$}
{Set $\xi=\xi-1$\;}
Set $\bar{k}=\xi$ and $\bar{y}=b_{\bar{k}}^i(\bar{x}_h)$\;
\While{$\phi>1$ and $[f_{\phi-1}^i]^{-1}(\bar{x}_h)>[f_{\phi}^i]^{-1}(\bar{x}_h)$}
{Set $\phi=\phi-1$\;}
Set $\bar{j}=\phi$ and $\bar{z}=[f_{\bar{j}}^i]^{-1}(\bar{x}_h)$\;
Let $\bar{x}_{h+1}$ be the solution of the one-dimensional equation $f_{\bar{j}}^i(b_{\bar{k}}^i(x))=x$\;
Set $h=h+1$\;
}
\Return{$(\bar{u}_i,\bar{u}_{i+1})=(\bar{x}_h,\bar{y})$}\;
\end{algorithm}
We illustrate the algorithm on the previous example.
\begin{example}
The slopes of the functions $b_k^i$ are already ordered in a decreasing way, while those of the functions $[f_j^i]^{-1}$ are already ordered in an increasing way.
\newline
In the first iteration we immediately exit the first inner {\tt While} cycle since $b_4^i(\bar{x}_1)<b_3^i(\bar{x}_1)$, so that at the end of the cycle
we set $\bar{k}=\xi=4$. We also immediately exit the second  inner {\tt While} cycle since $[f_3^i]^{-1}(\bar{x}_1)>[f_2^i]^{-1}(\bar{x}_1)$, so that at the end of the cycle
we set $\bar{j}=\phi=3$.
\newline
In the second iteration the first inner {\tt While} cycle is repeated twice since
$$
b_4^i(\bar{x}_2)>b_3^i(\bar{x}_2)>b_2^i(\bar{x}_2)<b_1^i(\bar{x}_2),
$$ so that at the end of the cycle
we set $\bar{k}=\xi=2$. The second  inner {\tt While} cycle is repeated once since
$$
[f_3^i]^{-1}(\bar{x}_1)<[f_2^i]^{-1}(\bar{x}_2)>[f_1^i]^{-1}(\bar{x}_2),
$$
so that at the end of the cycle
we set $\bar{j}=\phi=2$.\newline
In the third iteration we immediately exit the first inner {\tt While} cycle since $b_2^i(\bar{x}_1)<b_1^i(\bar{x}_3)$, so that at the end of the cycle
we set $\bar{k}=\xi=2$. We also immediately exit the second  inner {\tt While} cycle since $[f_2^i]^{-1}(\bar{x}_3)>[f_1^i]^{-1}(\bar{x}_3)$, so that at the end of the cycle
we set $\bar{j}=\phi=2$.
\end{example}
The following proposition establishes the complexity of Algorithm \ref{Alg:linear}.
\begin{prop}
Algorithm \ref{Alg:linear} has complexity $O(r_i \log(r_i) + t_i \log(t_i))$.
\end{prop}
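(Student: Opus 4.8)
The plan is to account for the total work done by Algorithm~\ref{Alg:linear} by separating two contributions: the preprocessing cost of sorting the slopes, and the cost of the main \texttt{While} loop. First I would observe that the initialization sorts the $t_i+1$ slopes of the functions $b_k^i$ and the $r_i$ slopes of the functions $[f_j^i]^{-1}$; using a standard comparison sort this takes $O(t_i\log t_i)$ and $O(r_i\log r_i)$ time respectively, and removing redundant (equal-slope) constraints is a single linear pass over the already-sorted lists, hence dominated by the sorting. So the preprocessing is $O(r_i\log r_i + t_i\log t_i)$, which already matches the claimed bound; it remains to show the main loop contributes no more than this, in fact $O(r_i+t_i)$.

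The key point is an amortized (monotone-pointer) argument on $\xi$ and $\phi$. Both pointers are initialized to $t_i+1$ and $r_i$ and are only ever \emph{decremented}, never increased, across all iterations of the outer \texttt{While}; the inner \texttt{While} cycles do exactly one slope comparison per decrement, plus one comparison to exit. Therefore the total number of executions of the body of the first inner loop over the whole run is at most $t_i+1$, and of the second at most $r_i$; the total cost of all inner-loop work is $O(r_i+t_i)$. It then remains to bound the number of outer iterations. Here I would invoke Lemma~\ref{lem:1}: the sequence $\{\bar{x}_h\}$ is strictly decreasing, so a given pair $(\bar j,\bar k)$ — equivalently a given position of the pair of pointers $(\phi,\xi)$ — can recur in at most one iteration (a repeated pair would force $\bar{x}_{h+1}=\bar{x}_h$ via the fixed-point equation $f_{\bar j}^i(b_{\bar k}^i(x))=x$, hence termination). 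Since each outer iteration that does not terminate strictly decreases $\phi+\xi$ (at least one pointer moves, because a non-terminating step has $\bar y<\bar z$ and the geometry forces progress), the number of outer iterations is $O(r_i+t_i)$, and each performs $O(1)$ work outside the inner loops (one solve of a one-dimensional \emph{linear} equation, i.e.\ $C_{eq}=O(1)$ here). Summing, the loop costs $O(r_i+t_i)$.

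Adding the two contributions gives $O(r_i\log r_i + t_i\log t_i) + O(r_i+t_i) = O(r_i\log r_i + t_i\log t_i)$, as claimed. The main obstacle I anticipate is making the "monotone pointer" claim airtight: one must verify that the pointers are genuinely never reset between successive outer iterations (they persist across the outer loop, unlike in Algorithm~\ref{Alg:convex}, where all functions are re-evaluated each time), and that each non-terminating outer iteration really does advance at least one pointer — this is where Lemma~\ref{lem:1} does the essential work, ruling out an infinite sequence of iterations that all reuse the same active constraints. A secondary, more routine point is checking that with linear $b_k^i$ and $f_j^i$ the equation in the penultimate line of the loop is linear and thus solvable in $O(1)$, so that $C_{eq}$ does not reintroduce a hidden factor.
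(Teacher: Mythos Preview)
Your proof is correct and follows essentially the same approach as the paper: you identify the sorting step as the dominant cost, use the monotone-pointer (amortized) argument to bound the total inner-loop work by $O(r_i+t_i)$, and invoke Lemma~\ref{lem:1} together with pointer monotonicity to bound the number of outer iterations by $O(r_i+t_i)$, with each such iteration doing $O(1)$ work on a linear equation. The paper phrases the iteration bound as ``at each iteration at least one slope must change,'' which is exactly your observation that a repeated pair $(\bar j,\bar k)$ would force $\bar y=\bar z$ and hence termination.
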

\begin{proof}
We first remark that the initial orderings of the slopes already require
the computing time $O(r_i \log(r_i) + t_i \log(t_i))$, while removing
redundant constraints requires $O(t_i + r_i)$ time. Next, we remark that
in the linear case $C_b, C_f$ and $C_{eq}$ are $O(1)$ operations. In particular, the one-dimensional equation is a linear one. Moreover, we notice that
$B_i^{\bar{{\bf u}}}$ is a concave piecewise linear function, while $[F_i]^{-1}$ is a convex piecewise linear function.
Since in view of Lemma \ref{lem:1} the sequence  $\{\bar{x}_h\}$ is decreasing, the corresponding sequence of slopes of the function $B_i^{\bar{{\bf u}}}$ at points $\bar{x}_h$ is not decreasing, while
the sequence of slopes of the function $[F_i]^{-1}$ is not increasing, and at each iteration at least one slope must change (otherwise we would solve the same linear equation
and $\bar{x}_h$ would not change). Then, the number of different slope values and, thus, the number of iterations, can not be larger than $t_i+r_i+1$.
Moreover, by updating the two pointers $\xi$ and $\phi$ , the overall number of evaluations of the functions $b_k^i$ and $[f_j^i]^{-1}$, needed to compute
the different values $\bar{y}$ and $\bar{z}$ in the outer {\tt While} cycle can not be larger than $O(t_i+r_i)$. Consequently, the computing time of
the outer {\tt While} cycle is $O(t_i+r_i)$ and the complexity of the algorithm is determined by the initial orderings of the slopes.
\end{proof}

While in practice we employed Algorithm \ref{Alg:linear} to compute $\bar{u}_i, \bar{u}_{i+1}$, in the linear case
we could also solve the linear subproblem (\ref{eq:convex sub}). This can be done in linear time $O(t_i+r_i)$ with respect to the
number of constraints, e.g., by Megiddo's algorithm (see \citep{Megiddo83}). Thus, we can state the following complexity result for Problem
(\ref{eq:probl}) in the linear case.
\begin{thm}
If $f_j^i$, $b_k^i$, $i=1,\ldots,n$, $j=1,\ldots,r_i$, and $k=1,\ldots,t_i$, are linear functions, then Problem
(\ref{eq:probl}) can be solved in time $O(\sum_{i=1}^n (t_i+r_i))$ by Algorithm~\ref{Alg:FindProfile3}, if $\bar{u}_i, \bar{u}_{i+1}$
are computed by Megiddo's algorithm. Such complexity is optimal.
\end{thm}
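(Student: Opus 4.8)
The plan is to establish the statement in two parts: the running-time bound for Algorithm~\ref{Alg:FindProfile3} (its correctness having already been proved above), and the matching lower bound that makes this complexity optimal.

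\emph{Running time.} I would bound the two phases of Algorithm~\ref{Alg:FindProfile3} separately. As observed just before the theorem, the updated pair $(\bar u_i,\bar u_{i+1})$ computed in the $i$-th iteration of the forward phase is precisely the optimal solution of the two-dimensional problem~(\ref{eq:convex sub}). When all the $f_j^i$ and $b_k^i$ are linear, (\ref{eq:convex sub}) is a linear program in two variables with $r_i+t_i+2$ constraints; it is feasible (e.g.\ $(0,0)$) and bounded, since the box $0\le v_i\le\bar u_i$, $0\le v_{i+1}\le\bar u_{i+1}$ has finite nonnegative sides (the vector $\bar{{\bf u}}$ is initialized at ${\bf u}\in\mathbb{R}_+^n$ and only decreased throughout the algorithm). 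Hence Megiddo's linear-time two-variable LP algorithm~\citep{Megiddo83} solves it in $O(r_i+t_i)$ time, and summing over $i=1,\ldots,n-1$ bounds the forward phase by $O(\sum_{i=1}^n(t_i+r_i))$. In the backward phase the $i$-th iteration only evaluates $B_i(\bar u_{i+1})=\min_{j=1,\ldots,r_i}f_j^i(\bar u_{i+1})$ and takes one further minimum, at cost $O(r_i)$, so the backward phase costs $O(\sum_{i=1}^n r_i)$. Together with the $O(n)$ bookkeeping this yields total time $O(\sum_{i=1}^n(t_i+r_i))$.

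\emph{Optimality.} For the lower bound I would invoke the standard input-size (adversary) argument. In the linear case each function $f_j^i$, $b_k^i$ is specified by $O(1)$ numbers, so an instance of Problem~(\ref{eq:probl}) consists of $\Theta(\sum_{i=1}^n(t_i+r_i))$ parameters (plus the $n$ bounds $u_i$). Any correct algorithm must read every such parameter: if a constraint, say $v_i\le f_j^i(v_{i+1})$, were never inspected on some instance, an adversary could modify its slope and intercept (keeping it linear, increasing, with positive value at $0$, so that Assumption~\ref{ass:1} still holds) so as to turn it into the binding constraint on $v_i$, changing the returned solution and contradicting correctness. Hence $\Omega(\sum_{i=1}^n(t_i+r_i))$ operations are necessary, which matches the upper bound, so the stated complexity is optimal.

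\emph{Main difficulty.} The running-time half is essentially bookkeeping once the equivalence with~(\ref{eq:convex sub}) and the linear-in-the-number-of-constraints guarantee of Megiddo's algorithm are in hand; the only point requiring care is the lower bound, where one must argue that \emph{every} coefficient is relevant — i.e.\ that perturbing an unread coefficient can always be arranged to invalidate the output — so that the naive ``read the whole input'' bound is genuinely $\Omega(\sum_{i=1}^n(t_i+r_i))$ rather than being undercut by a priori redundant constraints.
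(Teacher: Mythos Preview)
Your proof is correct and follows essentially the same approach as the paper: bound the forward phase by invoking Megiddo's linear-time algorithm on each two-dimensional LP~(\ref{eq:convex sub}), observe that the backward phase merely evaluates $B_i$ at cost $O(r_i)$ per step, and obtain optimality from the input size. Your treatment is in fact somewhat more careful than the paper's (you explicitly check feasibility/boundedness of the subproblem and spell out the adversary argument for the lower bound, whereas the paper simply asserts that the input has size $\Theta(\sum_i(t_i+r_i))$), but the underlying ideas are identical.
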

\begin{proof}
The complexity result immediately follows by the observation that the most time consuming part of Algorithm~\ref{Alg:FindProfile3}
is the forward one with the computation of $\bar{u}_i, \bar{u}_{i+1}$. Indeed, the backward part is run in $O(\sum_{i=1}^n t_i)$ time (at each iteration
we only need to evaluate $B_i$).
The fact that such complexity is optimal follows from the observation that $O(\sum_{i=1}^n (t_i+r_i))$ is also the size of the input values for the problem.
\end{proof}

\section{Discretization of the speed-planning problem}\label{sec:Disc-Problem}
Problem~\ref{prob:disc} does not belong to the class defined
in~(\ref{eq:probl}). In this section, we show that a
small variation of Problem~\ref{prob:2}, followed by discretization, allows obtaining a problem that belongs to class~(\ref{eq:probl}).

To this end, consider the following family of problems, depending on the
positive real parameter~$h$.
\begin{problem}\label{prob:offset}
\begin{align}
\displaystyle\min_{a,\btau \in C^{0}, b \in C^{1}}&  \displaystyle\int_0^{s_f} b(s)^{-1/2} \, ds, \label{con:obj-5}\\
\mbox{\small\textrm{subject to}} &  \ (\forall s \in [0,s_{f}], \ j=1,\dots,p) \nonumber\\
&b(s + \lambda_{j}(s)) = \hat{b}_{j}(s) \\
&b(s + \eta_{j}(s)) = \bar{b}_{j}(s) \\
& d_{j}(s)a(s) + c_{j}(s) \hat{b}_{j}(s) + g(s) = \tau_{j}(s), \label{con:dynamic-5} \\
&  \gamma^{\prime}_{j}(s)a(s)+ \gamma^{\prime\prime}_{j}(s)\bar{b}_{j}(s) = \ddot{q}_{j}(s), \label{con:gamma_5}\\
& b^\prime(s) = 2a(s), \label{con:der-5}\\
& \lvert \ddot{\bq}(s) \rvert \le \balpha (s), \label{con:acc-5}\\
&  0 \le  \bgamma^{\prime}(s)^{2} b(s) \le \bpsi(s)^{2}, \label{con:vel-5}\\
& \lvert  \tau(s) \rvert  \le \bmu(s), \label{con:force-5}\\
& b(0) = 0, \, b(s_f) =0,  \label{con:inter-5}\\
\end{align}
\end{problem}
where functions $\lambda_{j}$ and $\eta_{j}$ are defined as follows ($\forall s \in[0,s_{f}]$ and $j = 1,\dots,p$):
\begin{equation}\label{eq:lambda-cont}
\lambda_{j}(s) = \begin{cases}
h, & d_{j}(s)c_{j}(s) \ge 0\\
0, & d_{j}(s)c_{j}(s)  \le 0,
\end{cases}
\end{equation}
\begin{equation}\label{eq:eta-cont}
\eta_j(s) = \begin{cases}
h, & \gamma^{\prime}_{j}(s)\gamma^{\prime\prime}_{j}(s) \ge 0\\
0, &\gamma^{\prime}_{j}(s)\gamma^{\prime\prime}_{j}(s)  \le 0\,.
\end{cases}
\end{equation}

Note that, for $h=0$, Problem~\ref{prob:offset} becomes Problem~\ref{prob:2}.
Further, for every $h>0$, Problem~\ref{prob:offset} has an optimal
solution (this can be proved with the same arguments used for
Proposition~\ref{prop-solution-existence}). Let $b^*_h$ be the
solution of Problem~\ref{prob:offset} as a function of $h$.
Note that, by~\eqref{con:gamma_5} and~\eqref{con:acc-5},
$\forall h>0, \forall s \in [0,s_f], |b'(s)| \leq L = 2\sqrt{p}(\|\balpha\| + p \|\bgamma^{\prime\prime}\|\|\bpsi\|^{2})$
so  that $b^*_h$
is Lipshitz with constant $L$, independent of $h$. Thus, Ascoli-Arzel\`{a}
Theorem implies that from any succession of solutions $b^*_{h_i}$,
with $\lim_{i \to \infty} h_i=0$ we can extract a convergent
subsequence that converges to a solution of Problem~\ref{prob:2}.



Discretizing Problem~\ref{prob:offset} with step $h$, we obtain the following problem.

\begin{problem}\label{prob:4}
\begin{align}
\displaystyle\min_{\ba,\bbold,\btau}& \ 2h\displaystyle\sum_{i=1}^{n-1} \left( \frac{1}{b_i^{1/2} + b_{i+1}^{1/2}} \right),\label{con:obj-4}\\
\mbox{\small\textrm{subject to}} &  \ (i=1,\dots,n-1)\\
& \blambda_{i} b_{i+1} + (\be-\blambda_{i})b_{i} = \hat{\boldsymbol{b}}_{i}, \label{con:convex-1}\\
& \etab_{i} b_{i+1} + (\be-\etab_{i})b_{i} = \bar{\boldsymbol{b}}_{i}, \label{con:convex-2}\\
& \bd_i a_{i}\be + \bc_i  \hat{\boldsymbol{b}}_{i} + \bg_i = \btau_i, \label{con:dynamic-4} \\
& \bgamma^{\prime}_i a_{i}\be+ \bgamma^{\prime\prime}_i\bar{\boldsymbol{b}}_{i} = \ddot{\bq}_{i}\\
&b_{i+1} - b_{i} = 2 a_i h ,&  \label{con:approx-4} \\
& \lvert \btau_i  \rvert \le \bmu_i, \label{con:force-4}\\
&  0 \le  \bgamma^{\prime 2}_{i}b_i \le \bpsi_{i}^{2}, &  \label{con:vel-4}\\
& \lvert \ddot{\bq}_{i} \rvert \le \balpha_i, \label{con:acc-4}\\
& b_1 = 0, \, b_n =  0. \label{con:inter-4} \\
&\boldsymbol{b} \in \Real^{n} \ \ba \in\Real^{n-1},\btau_{i}\in\Real^{p}\,.
\end{align}
\end{problem}
Here, for $j = 1,\dots,p$,
$i=1,\dots,n$, \[
\lambda_{j,i} =
\begin{cases}
1, &  d_{j,i}  c_{j,i} \ge 0 \\
0, &  d_{j,i} c_{j,i}  < 0\,,
\end{cases}
\]
\[
\eta_{j,i} =
\begin{cases}
1, &  \bgamma_{j,i}^{\prime}  \bgamma^{\prime\prime}_{j,i} \ge 0 \\
0, &  \bgamma_{j,i}^{\prime}  \bgamma^{\prime\prime}_{j,i}  < 0\,.
\end{cases}
\]

The following proposition will be proved in the appendix.
\begin{prop} \label{prop:selection}
Problem~\ref{prob:4} belongs to problem class (\ref{eq:probl}).
\end{prop}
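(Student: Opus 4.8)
The goal is to show that Problem~\ref{prob:4}, after eliminating the auxiliary variables, can be written in the form~(\ref{eq:probl}) with all the structural properties of Assumption~\ref{ass:1}. The natural change of variables is $v_i = \sqrt{b_i}$, i.e. $b_i = v_i^2$, so that the objective $2h\sum_{i=1}^{n-1} 1/(b_i^{1/2}+b_{i+1}^{1/2}) = 2h\sum 1/(v_i+v_{i+1})$ becomes a monotonic non-increasing function $g(v_1,\ldots,v_n)$, as required by the first bullet of Assumption~\ref{ass:1}. The box constraints $0 \le \bgamma_i'^2 b_i \le \bpsi_i^2$ together with $v_i \ge 0$ give bounds $0 \le v_i \le u_i$ for suitable $u_i$ (using Assumption~\ref{ass:psi} so that $\bpsi_i>0$). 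It then remains to rewrite the remaining constraints, which couple $b_i$ and $b_{i+1}$, as a family of constraints of the form $v_i \le f_j^i(v_{i+1})$ and $v_{i+1} \le b_k^i(v_i)$ with each $f_j^i, b_k^i$ concave, increasing, and strictly positive at $0$.

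The key step is to eliminate $a_i$, $\btau_i$, $\hat{\boldsymbol b}_i$, $\bar{\boldsymbol b}_i$ and $\ddot{\bq}_i$ in favour of $b_i, b_{i+1}$ only. From~(\ref{con:approx-4}) we have $a_i = (b_{i+1}-b_i)/(2h)$; from~(\ref{con:convex-1})--(\ref{con:convex-2}) and the definitions of $\lambda_{j,i},\eta_{j,i}$, each component $\hat b_{j,i}$ equals either $b_i$ or $b_{i+1}$, and likewise $\bar b_{j,i}$. Substituting into~(\ref{con:dynamic-4}) and into the force bound~(\ref{con:force-4}), each scalar constraint $|\tau_{j,i}| \le \mu_{j,i}$ becomes a pair of affine inequalities in $(b_i,b_{i+1})$ of the form $d_{j,i}\frac{b_{i+1}-b_i}{2h} + c_{j,i}\hat b_{j,i} + g_{j,i} \in [-\mu_{j,i},\mu_{j,i}]$; similarly the acceleration bound~(\ref{con:acc-4}) after substitution into~(\ref{con:gamma_5})-analogue yields affine inequalities in $(b_i,b_{i+1})$. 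The whole point of introducing the offsets $\lambda_{j,i},\eta_{j,i}$ is precisely that the sign choice makes the coefficient of $b_i$ (resp. $b_{i+1}$) in each such inequality have the right sign: for instance when $d_{j,i}c_{j,i}\ge 0$ one picks $\hat b_{j,i}=b_{i+1}$ so that the two occurrences of $b_{i+1}$ (one from $a_i$, one from $\hat b_{j,i}$) combine with coefficients that, upon solving for one variable in terms of the other, give an \emph{increasing} affine map. I would carry out this sign bookkeeping case by case (the four sign combinations of $d_{j,i}c_{j,i}$ and of $\gamma'_{j,i}\gamma''_{j,i}$), in each case isolating $b_{i+1} \le (\text{affine in } b_i)$ or $b_i \le (\text{affine in } b_{i+1})$ with nonnegative slope and nonnegative intercept.

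Once every coupling constraint has been put in the form $b_{i+1} \le p + q\,b_i$ or $b_i \le p + q\,b_{i+1}$ with $p \ge 0$, $q \ge 0$, I pass back to the $v$ variables: $v_{i+1}^2 \le p + q v_i^2$ is equivalent to $v_{i+1} \le \sqrt{p + q v_i^2} =: b_k^i(v_i)$, and $\sqrt{p+qx^2}$ is indeed increasing on $\mathbb{R}_+$, concave (its second derivative is $-pq/(p+qx^2)^{3/2}\le 0$), and equals $\sqrt{p}$ at $0$ — strictly positive provided $p>0$. Positivity of the intercept is where Assumption~\ref{ass:mu} enters: $\bmu_i - |\bg_i| > \varepsilon\be$ guarantees that the affine expressions coming from the force bounds have a strictly positive constant term (at $b_i=b_{i+1}=0$ the force is $\bg_i$, strictly inside the box $[-\bmu_i,\bmu_i]$), and for the acceleration and velocity bounds the intercept is positive because $\balpha_i, \bpsi_i$ are positive. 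The constraints that are pure upper bounds on a single $b_i$ (the velocity bound and the box) are folded into the $u_i$. The main obstacle is purely the case analysis: one must check, for every sign pattern and every type of bound (upper/lower force, upper/lower acceleration), that after eliminating $a_i$ and substituting the selected $\hat b_{j,i},\bar b_{j,i}$, the resulting inequality really does have nonnegative slope — this is exactly what the piecewise definitions~(\ref{eq:lambda-cont})--(\ref{eq:eta-cont}) are engineered to ensure, and verifying it is the heart of the proof. With that done, all hypotheses of Assumption~\ref{ass:1} are met and Problem~\ref{prob:4} belongs to class~(\ref{eq:probl}).
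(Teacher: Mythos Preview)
Your overall plan---eliminate the auxiliary variables, reduce each coupling constraint to an affine inequality in $(b_i,b_{i+1})$ with the sign of the slope controlled by the choice of $\lambda_{j,i},\eta_{j,i}$, and then read off the structure of class~(\ref{eq:probl})---is exactly right, and this case analysis is indeed the heart of the argument. But the change of variables $v_i=\sqrt{b_i}$ introduces a genuine error. You claim that $x\mapsto\sqrt{p+qx^2}$ is concave, with second derivative $-pq/(p+qx^2)^{3/2}$. In fact
\[
\frac{d^2}{dx^2}\sqrt{p+qx^2}
= \frac{pq}{(p+qx^2)^{3/2}} \ge 0,
\]
so this function is \emph{convex} on $\mathbb{R}_+$ whenever $p,q>0$ (think of $\sqrt{1+x^2}$). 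Hence after your substitution the functions $b_k^i$ and $f_j^i$ fail the concavity requirement of Assumption~\ref{ass:1}, and the argument collapses.

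The fix is simply to drop the square-root substitution and take $v_i=b_i$ directly. The objective $2h\sum_i 1/(\sqrt{b_i}+\sqrt{b_{i+1}})$ is already monotone non-increasing in the $b_i$, so the first bullet of Assumption~\ref{ass:1} holds without any change of variables. With $v_i=b_i$ the coupling constraints you derived are affine, hence trivially concave and increasing once you have checked the slopes are nonnegative (which is precisely what the $\lambda_{j,i},\eta_{j,i}$ selection achieves) and the intercepts are positive (which follows from Assumption~\ref{ass:mu} and positivity of $\balpha_i$). This is exactly how the paper proceeds, and it has the further payoff that the resulting $f_j^i,b_k^i$ are \emph{linear}, which is what feeds into Algorithm~\ref{Alg:linear} and the optimal-complexity theorem; your square-root route, even if it had worked, would have lost that.
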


%

\subsection{Comparison with TOPP-RA algorithm (\citep{pham2017new})}
\label{sec_comparison}

As already mentioned in the introduction, a very recent and
interesting work, closely related to ours, is~\citep{pham2017new}. In
that paper a backward-forward approach is proposed. In the backward
phase a controllable set is computed for each discretization
point. This is an interval that contains all possible states for which
there exists at least a sequence of controls leading to the final
assigned state. The computation of each interval requires the solution
of two LP problems with two variables. Next, a forward phase is
performed where a single LP with two variables
is solved for each discretization point. The final result is a
feasible solution which, however, is optimal under the assumption that
no zero-inertia points are present. In the presence of zero-inertia
points a solution is returned whose objective function value differs
from the optimal one by a quantity proportional to the discretization
step $h$. The overall number of two-dimensional LPs solved by this
approach is $3n$, while in our approach we solve in total only $n$
LPs. In~\citep{pham2017new} the LPs are solved by the simplex method
while we proposed an alternative method which turns out to be more
efficient. Indeed, our computational experiments will show that the
computation times are reduced by at least an order of magnitude when using our alternative method. In~\citep{pham2017new} it is observed that the practical (say, average) complexity of the simplex method is linear with respect
to the number of constraints. In fact, we observed that for two-dimensional LPs such complexity is not only the practical one but also the worst-case one.
Finally, in our approach we deal with the presence of zero-inertia
points through the addition of the displacements
(\ref{eq:lambda-cont})-(\ref{eq:eta-cont}). Introducing these
displacement, we are able to return an exact solution of the
discretized problem.

\section{Experimental results}
\label{sec:demo}
In this section, we consider a motion planning problem for a 3-DoF
manipulator and compare the computation time of the proposed
solver to other methods existing in literature. We also show an experiment on the execution
of a time-optimal velocity profile on a 6-DoF robotic manipulator.

\subsection{Test case on a 3-DOF manipulator}
We consider the robot  presented in~\citep{murray1994mathematical} (Chapter 4, example 4.3).
 This robot is a serial chain robot (see Figure~\ref{fig:3dof}), composed of 3 links connected 
 with 3 revolute joints (the first link is connected with a fixed origin).
Table~\ref{tab:demo_dyn} reports the robot parameters.
\begin{figure}
	\centering
		\includegraphics[height=0.35\columnwidth]{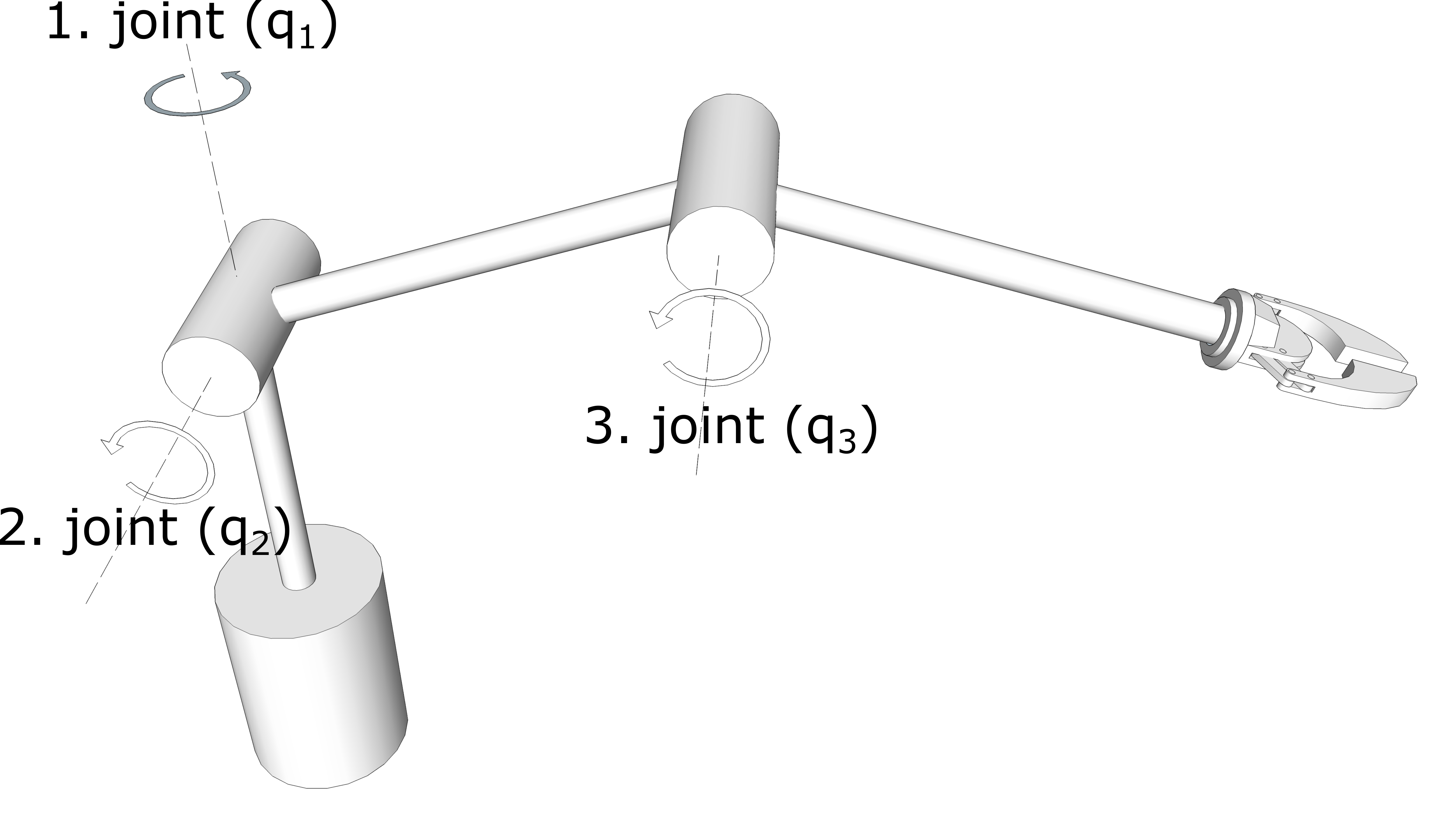}
		\caption{The 3-DoF manipulator with the three revolute joints ($\bq = [q_1, \  q_2, \ q_3]^{T}$). } \label{fig:3dof}
	
\end{figure}
Namely, for link
$i$, $i=1,\ldots,3$, $l_i$
is the length, and $r_i$ is the distance between the gravity center of
the link and the joint that connects it to the previous link in the
chain (see Figure \ref{fig:3dof_2d}). Parameters $I_{xi},I_{yi},I_{zi}$, $m_i$ are the diagonal components of
the inertia matrix and the mass of link $i$.
\begin{table}
	\small\sf\centering
		\caption{Kinematic and dynamic parameters for the 3-DoF manipulator.} 	\label{tab:demo_dyn}
	
		\tabcolsep=0.11cm
		\begin{tabular}{ |c|c|c|c|c| }
			\hline
			Link & ($I_{xi}$, $I_{yi}$, $I_{zi}$) [$kg$ $m^2$] & $m_i$ [$kg$] & $l_i$ [$m$] & $r_i$ [$m$] \\
			\hline	
			1 & (7.5, 7.5, 7.5) & 1.5 & 0.2 & 0.08 \\
			2 & (5.7, 5.7, 5.7) & 1.2 & 0.3 & 0.12 \\
			3 & (4.75, 4.75, 4.75) & 1.0 & 0.325 & 0.13 \\						
			\hline
		\end{tabular}
	
\end{table}

\begin{figure}
	\centering
		\includegraphics[height=0.4\columnwidth]{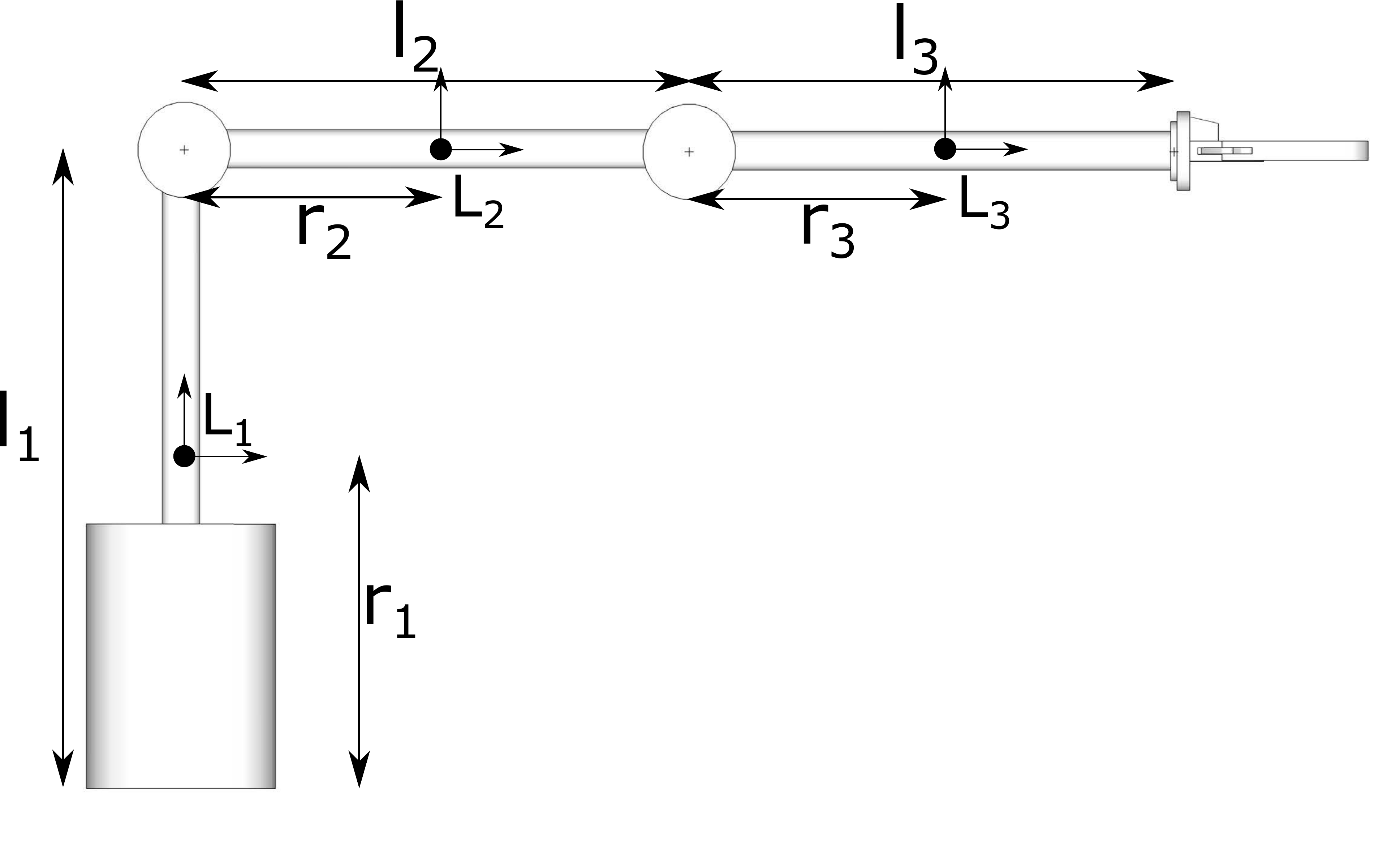}
		\caption{Kinematic and dynamic parameters for the 3-DoF manipulator. $L_i$ indicates the coordinate frame attached to the gravity center of the link. } \label{fig:3dof_2d}
	
\end{figure}

We consider an instance of Problem~\ref{prob:2}, where
the reference curve $\bgamma: [0,1] \to \Real^{3}$ is defined as a cubic spline that interpolates the points shown in
Table~\ref{tab:interpolated-point1}.
The mass matrix $\bD$,  the Coriolis matrix $\bC$ and the external
forces term $\bg$ that we consider are reported
in~\citep{murray1994mathematical} (Chapter 4, example 4.3).

\begin{table}[!h]
\caption{Points interpolated in the configuration space\label{tab:interpolated-point1}.}
\centering
\begin{tabular}{|c|c|c|c|}
\hline
$s$ & $\gamma_{1}$   & $\gamma_{2}$ & $\gamma_{3}$ \\
\hline
0 & 0 & 0 & 0\\
0.25 & 1.288 & -0.2864 & -0.2982\\
0.5 & 2.59 & -0.03045 & -0.5995\\
0.75 & 4.374 & -0.04647 & -0.582\\
1 & 5.334 & -0.1657 & -0.4504 \\
\hline
\end{tabular}
\end{table}
The following kinematic and dynamic bounds are applied for the presented test case ($\forall s \in [0,1]$)

\[
\begin{array}{l}
\bpsi(s) =  [2.0, \  2.0, \ 2.0]^{T} \\ [8pt]
\balpha(s) =  [1.5, \ 1.5, \ 1.5]^{T}  \\ [8pt]
\bmu(s) = [9, \ 9, \ 9]^{T}.
\end{array}
\]

\subsection{Computational time comparison}
We find an approximated solution of Problem~\ref{prob:2} by solving Problem~\ref{prob:4}
with four different methods.

\begin{enumerate}
\item a SOCP solver which solves the SOCP reformulation presented in~Equation (74)-(86) of~\cite{verscheure09};
\item a LP solver which solves the LP reformulation presented in~Equation (23) of~\cite{Nagy2017};
\item Algorithm~\ref{Alg:FindProfile3} using simplex method to solve the two-dimensional LP subproblems (\ref{eq:convex sub}).
\item Algorithm~\ref{Alg:FindProfile3} using Algorithm~\ref{Alg:linear} to solve the two-dimensional LP subproblems (\ref{eq:convex sub}).
\end{enumerate}
In the first and second method we use Gurobi solver~\cite{gurobi} while for the other methods we use a C++ implementation of Algorithm~\ref{Alg:FindProfile3}. We measure the performance on a 2.4 GHz
 Intel(R) Core(TM) i7-3630QM CPU.
\begin{figure}
	\centering
		\includegraphics[height=0.4\columnwidth]{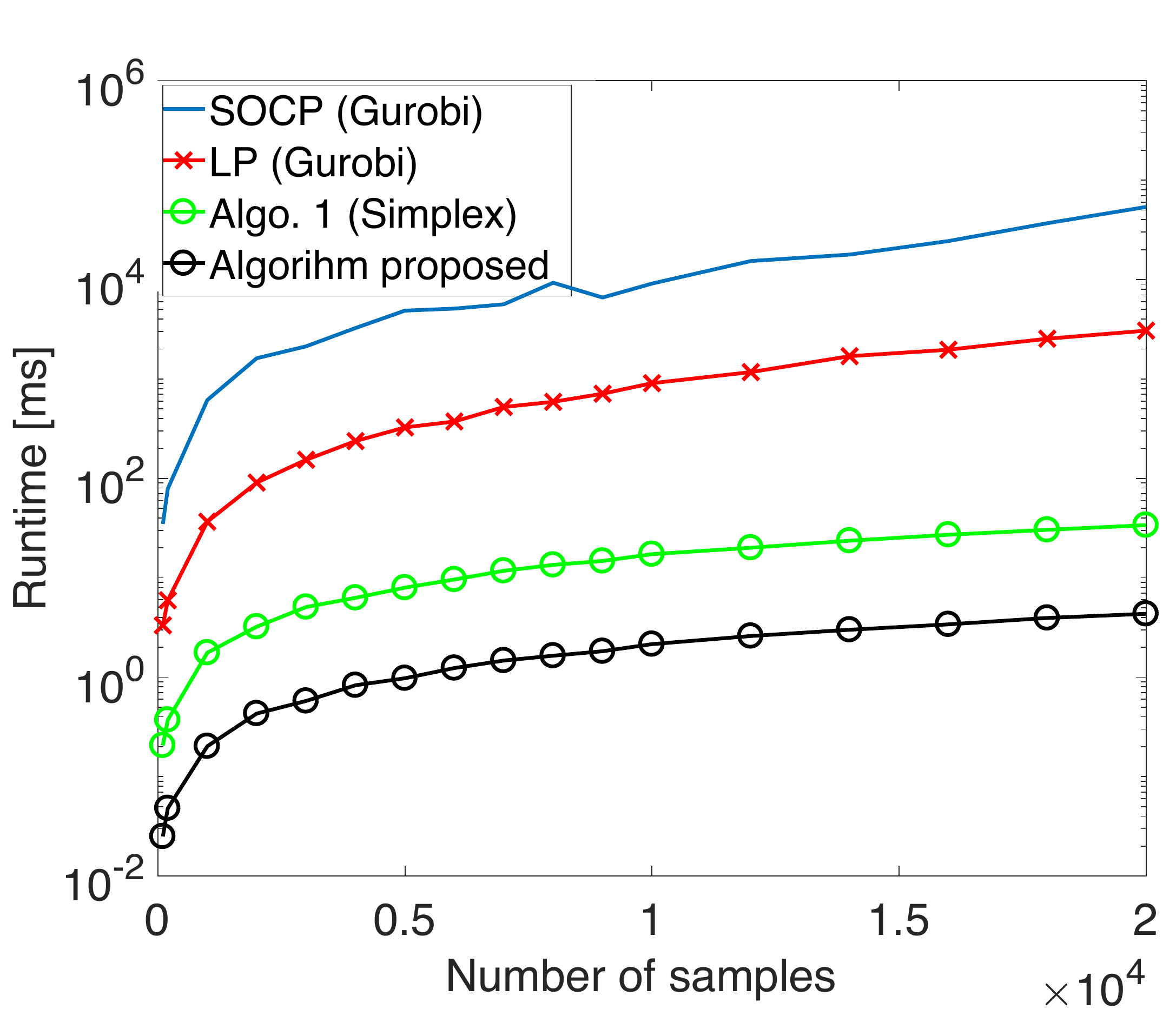}
		\caption{Computation times obtained using different
                  approaches as a function of the number of variables. } \label{fig:runtimes}
	
\end{figure}

The results presented in Figure~\ref{fig:runtimes} show that the Algorithm~\ref{Alg:FindProfile3} with
Algorithm \ref{Alg:linear} employed to solve the two-dimensional LP
subproblems significantly outperforms the other methods (in
particular, by more than four, two, and one of order of magnitude,
respectively). We did not compute directly the solution with the
TOPP-RA algorithm presented in~\cite{pham2017new}, however, note that
the computational time of TOPP-RA is comparable with
Algorithm~\ref{Alg:FindProfile3} using simplex method to solve the
two-dimensional LP subproblems (\ref{eq:convex sub}) (actually higher, since TOPP-RA solves 3$n$ LP problems, while our approach
solves only $n$ LP problems).

\subsection{Experimental results}

This section presents an experiment on a minimum-time trajectory
tracking task with a 6-DoF Mitsubishi RV-3SDB industrial robotic arm (see Figure \ref{fig:demo_robot}). We require the end-effector to track an assigned path. To this end, we compute a corresponding trajectory $\Gamma$ in the joint space and optimize the speed law on $\Gamma$ by solving Problem~\ref{prob:4} with Algorithm~\ref{Alg:FindProfile3}, using Algorithm \ref{Alg:linear} to solve the LP subproblems~(\ref{eq:convex sub}).


\begin{figure}
	\centering
		\includegraphics[height=0.4\columnwidth]{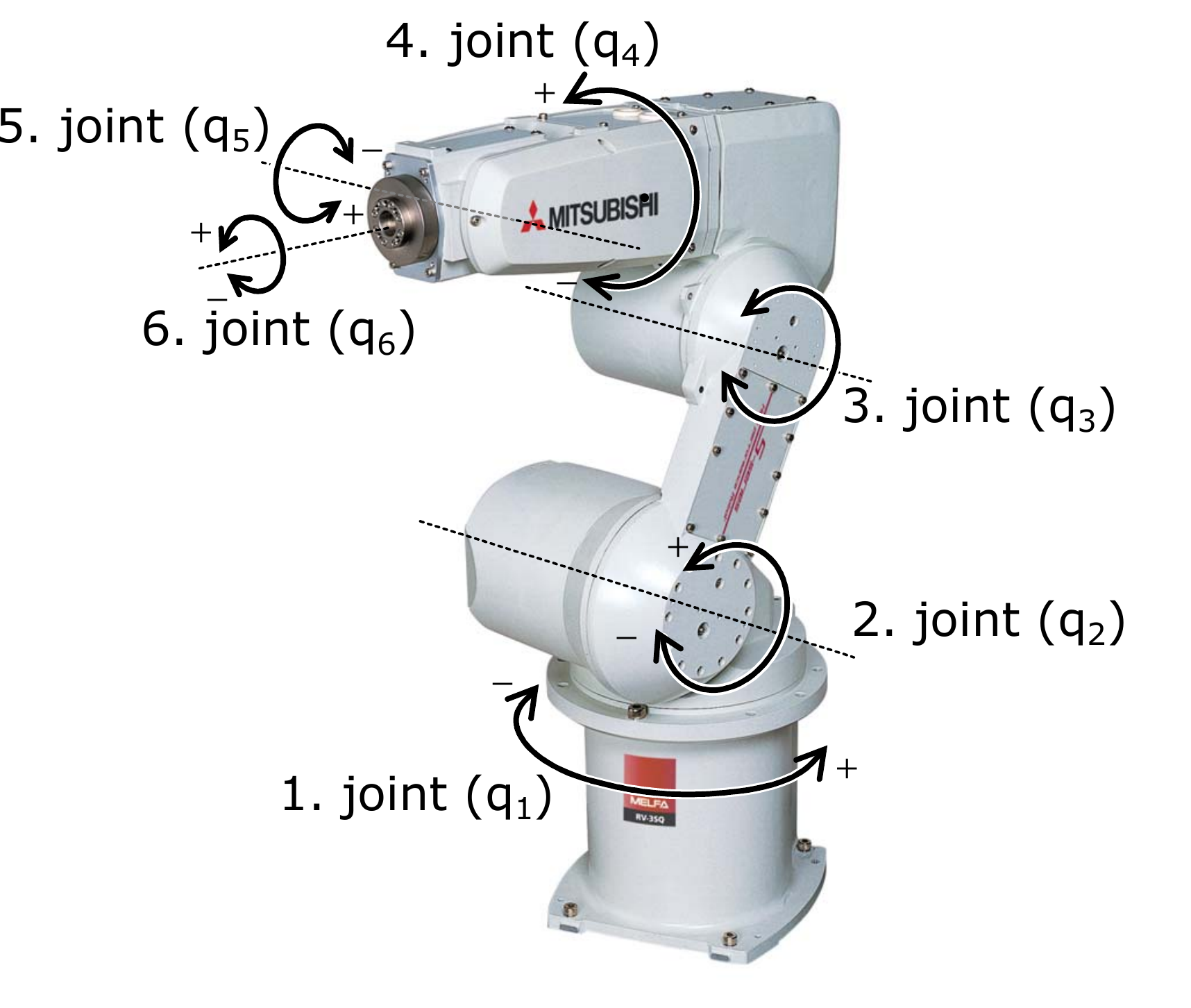}
		\caption{Mitsubishi RV-3SDB robot manipulator \cite{melfa_dat} ($\bq = [q_1, \  q_2, \ q_3, \ q_4, \ q_5, \ q_6]^{T}$).} \label{fig:demo_robot}
	
\end{figure}

The reference path of the end-effector is generated using V-REP robot simulator software~\cite{vrep13}. The path is defined by a Bezier curve with 6 control points (see Table~\ref{tb:demo_points}). In the next step, the Bezier curve is sampled in 100 points (see Figure~\ref{fig:demo_path}), which are transformed to joint space using an inverse kinematics method implemented in Robotics Toolbox for Matlab \cite{corke11}. The obtained configurations in joint space are interpolated  by a cubic spline, obtaining the reference path $\bgamma$. Derivatives $\bgamma^{\prime}$, $\bgamma^{\prime\prime}$ are calculated analytically from the spline coefficients. The reference path is sampled in $n=1000$ points.

\begin{table}[h]
	\caption{The control points of the path in V-REP.}  \label{tb:demo_points}

	\small\sf\centering
	\begin{tabular}{lll}
		\hline
		X[m] & Y[m] & Z[m]\\
		\hline
			
		0.247 & 0.443 & 0.737 \\
		0.402 & 0.335 & 0.704 \\
		0.502 & 0.360 & 0.654 \\
		0.495 & 0.195 & 0.618 \\
		0.603 & -0.178 & 0.537 \\
		0.498 & -0.398 & 0.537 \\		
	\end{tabular}
\end{table}

\begin{figure}
	\centering
		\includegraphics[height=0.4\columnwidth]{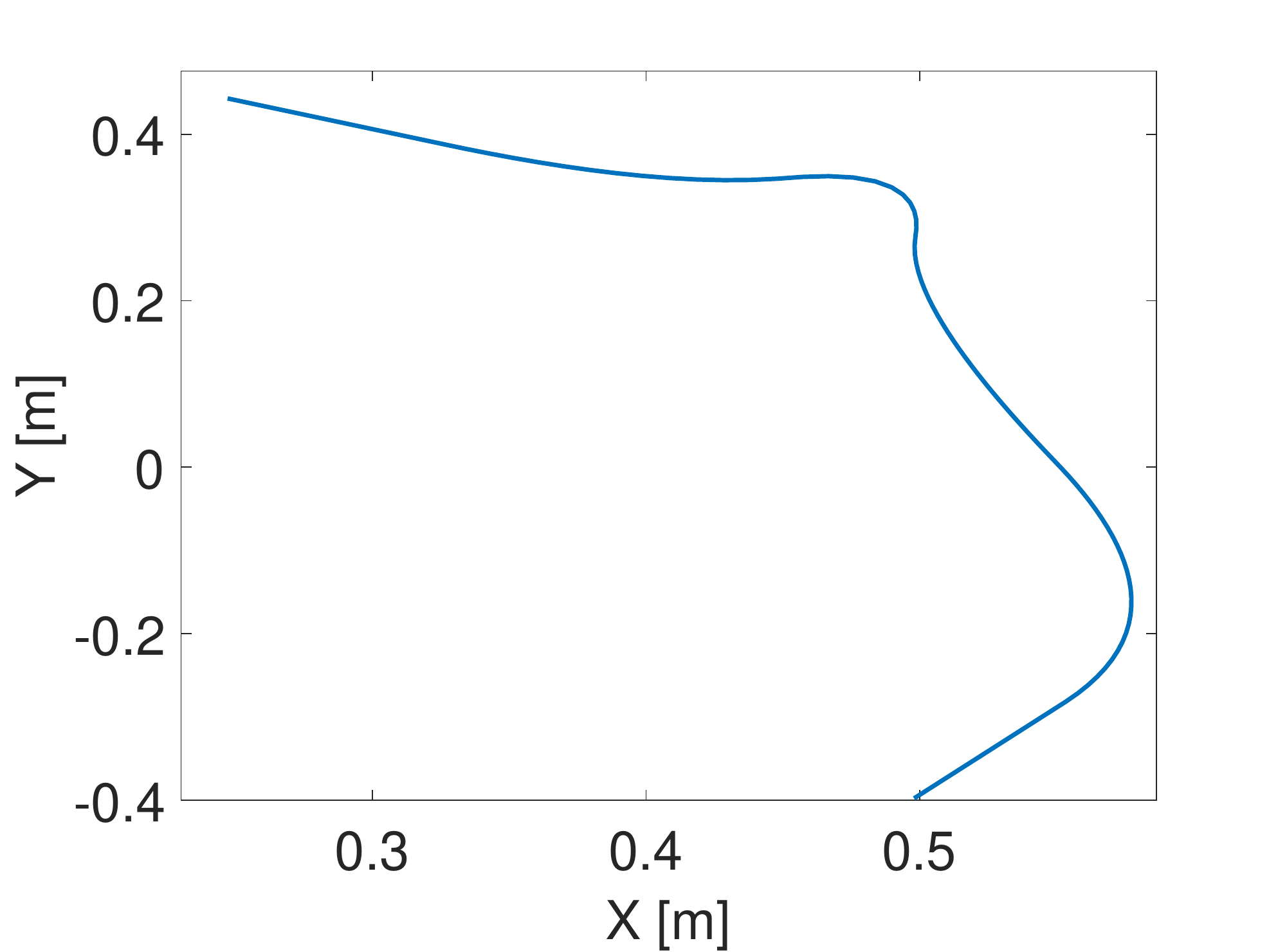}
		\caption{The reference path in the X-Y space.} \label{fig:demo_path}
	
\end{figure}

The following velocity and acceleration constraints are used for the six joints of the robot: ($\forall s \in [0,s_{f}]$)
\[
\begin{array}{l}
\bpsi(s) =  [1.0, \dots, 1.0]^{T}, \\ [8pt]
\balpha(s) = [4.0, \dots, 4.0]^{T} .
\end{array}
\]

We implemented Algorithm~\ref{Alg:FindProfile3} in C++. Velocity profile calculation takes less than 250 $\mu$s. Figure~\ref{fig:demo_profile} shows the generated velocity profile.

\begin{figure}[!h]
\centering
\includegraphics[width=0.6\columnwidth]{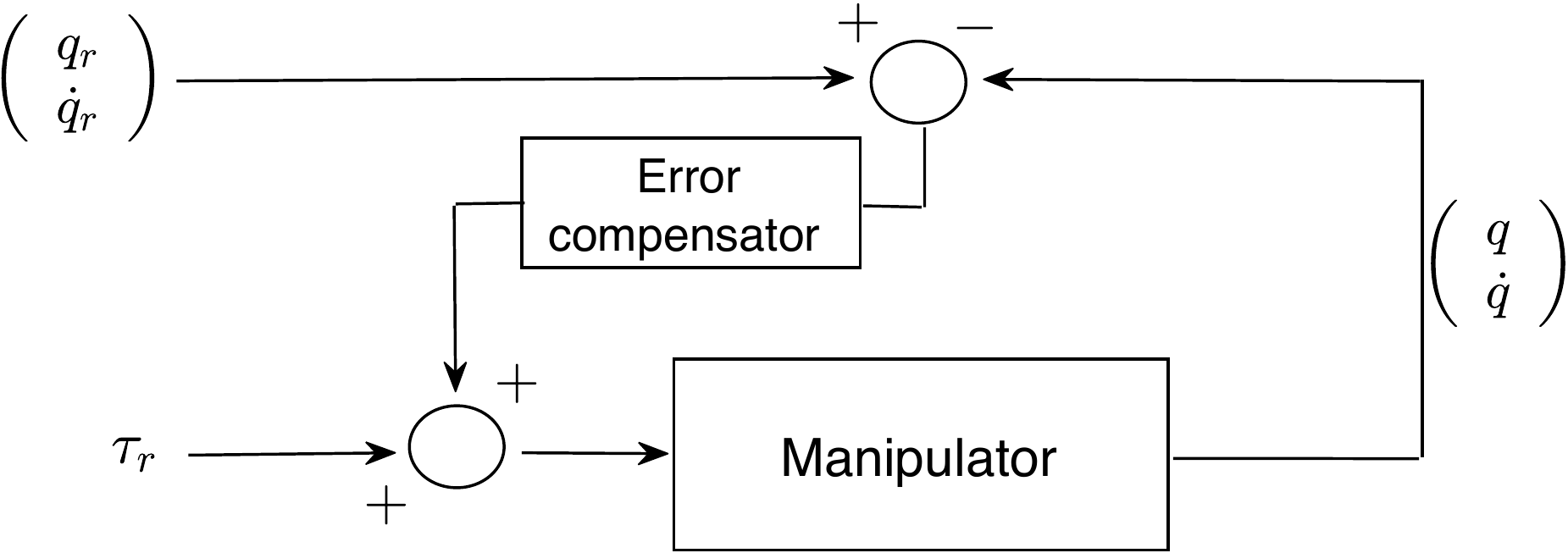}
\caption{Standard tracking control scheme.}\label{fig:trackingScheme}
\end{figure}

In order to compute a reference trajectory $(\bq_r,\dot \bq_r):
[0,s_f] \to T \mathcal{Q}$ and a reference input torque $\btau_r:[0,s_f] \to
\Real^p$ the following procedure can be applied. After the
discretized Problem~\ref{prob:4}
is solved, the continuous time
functions $\btau_c=\mathcal{I}_{\btau}$ and $b_c=\mathcal{I}_{\bbold}$ are obtained interpolating
$\btau$ and $\bbold$ as in Proposition~\ref{prop_interp_cond}.
Then, the position function $\lambda:[0,t_f] \to [0, s_f]$ is computed as the
solution of the differential equation
\[
\begin{array}{ll}
\dot \lambda(t) = \sqrt{b_c(\lambda(t))}\\
\lambda(0)=0\,.
\end{array}
\]
Finally, the reference trajectory and input are defined as
\[
\begin{array}{lll}
\bq_r(t) = \bgamma(\lambda(t)),\\[8pt]
\dot{\bq_r}(t) =  \bgamma^{\prime}(\lambda(t)) \dot
                   \lambda(t),\\[8pt]
\btau_r(t)=\btau_c(\lambda(t))\,.
\end{array}
\]

Then, one can achieve asymptotic exact tracking by the standard state
tracking control scheme shown in~Figure~\ref{fig:trackingScheme}, in which the reference torque
$\btau_r$ enters as a feedforward control signal. Due to the limitations of the available hardware, we used
a simplified control scheme.
Namely, the implemented controller is a simple position setpoint regulator,
where $\bq_r$ is used as a time-varying position reference signal  (see
Figure~\ref{fig:demo_block}).  

In particular, the robot is controlled using
Mitsubishi Real-time external control capability \cite{Melfa_inst}.
In this control scheme, the robot controller receives the time-varying
setpoint position $\bq_r$ from a PC via Ethernet
communication. The controller sends back to the PC various monitor data
(e.g., measured joint position, motor current). The controller sample rate
is $7.1$ ms \cite{Melfa_inst}.
\begin{figure*}
	\centering
		\includegraphics[width=0.7\columnwidth]{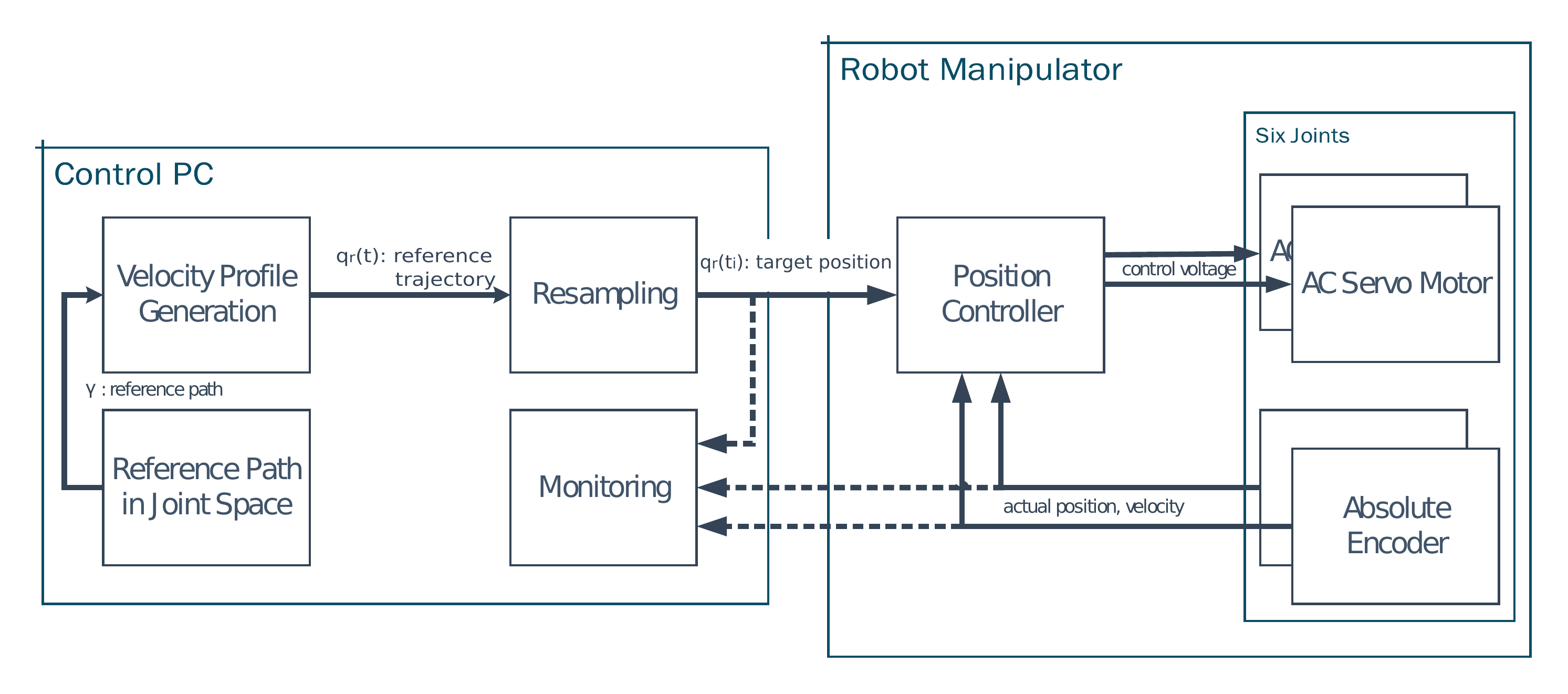}
		\caption{The implemented control scheme for trajectory tracking.} \label{fig:demo_block}
	
\end{figure*}

Figure~\ref{fig:demo_velocity_error} shows the
difference between the measured and the reference velocity profile for the second joint,
while Figure~\ref{fig:demo_path_error} shows the joint position error
for the same joint.
Note that the tracking error is low despite the use of
such a simple controller.


\begin{figure}
	\centering
		\includegraphics[width=0.6\columnwidth, height = 10cm]{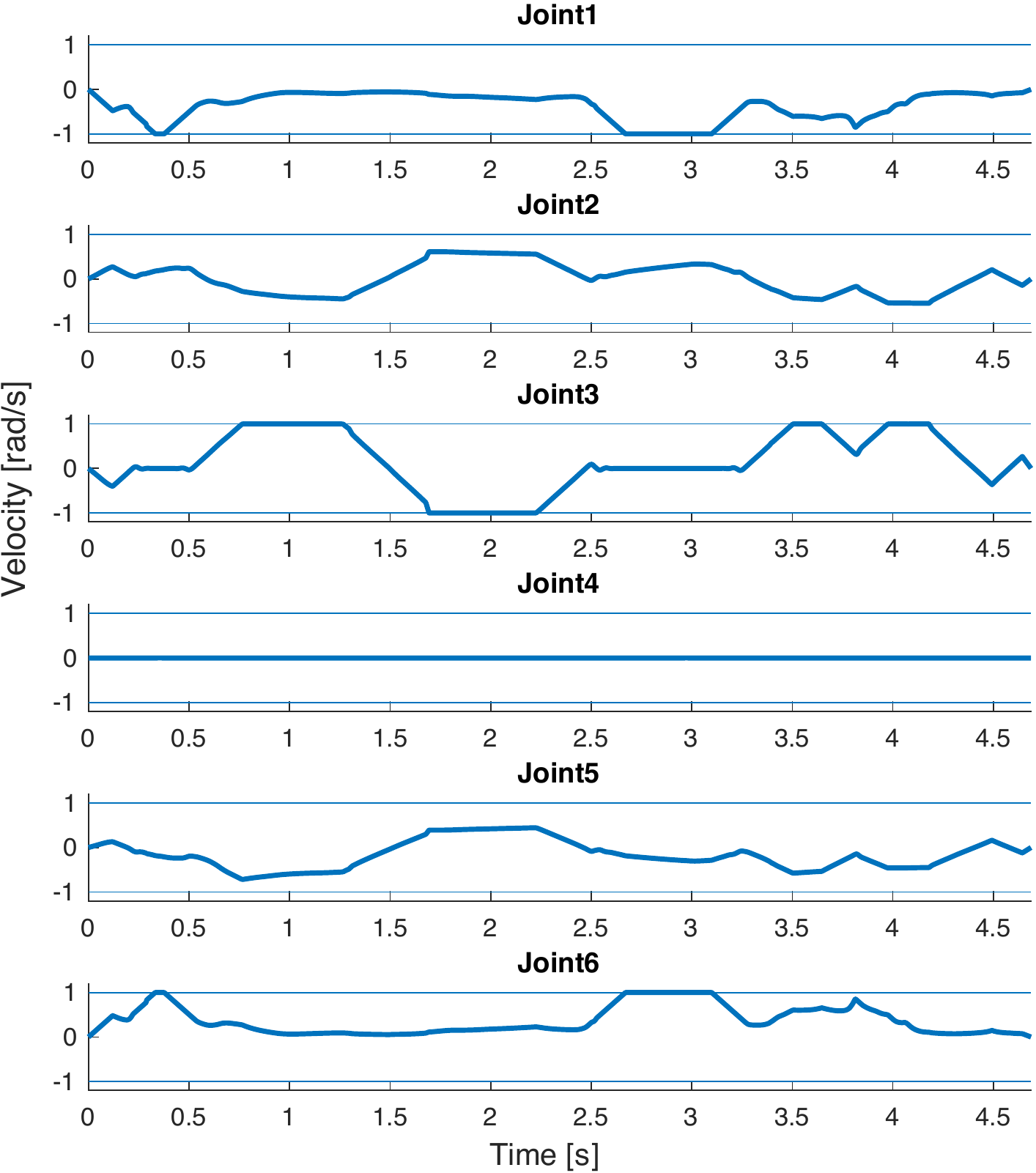}
		\caption{The generated profiles for the six-joints.} \label{fig:demo_profile}	
\end{figure}
\begin{figure}
	\centering
		\includegraphics[width=0.6\columnwidth]{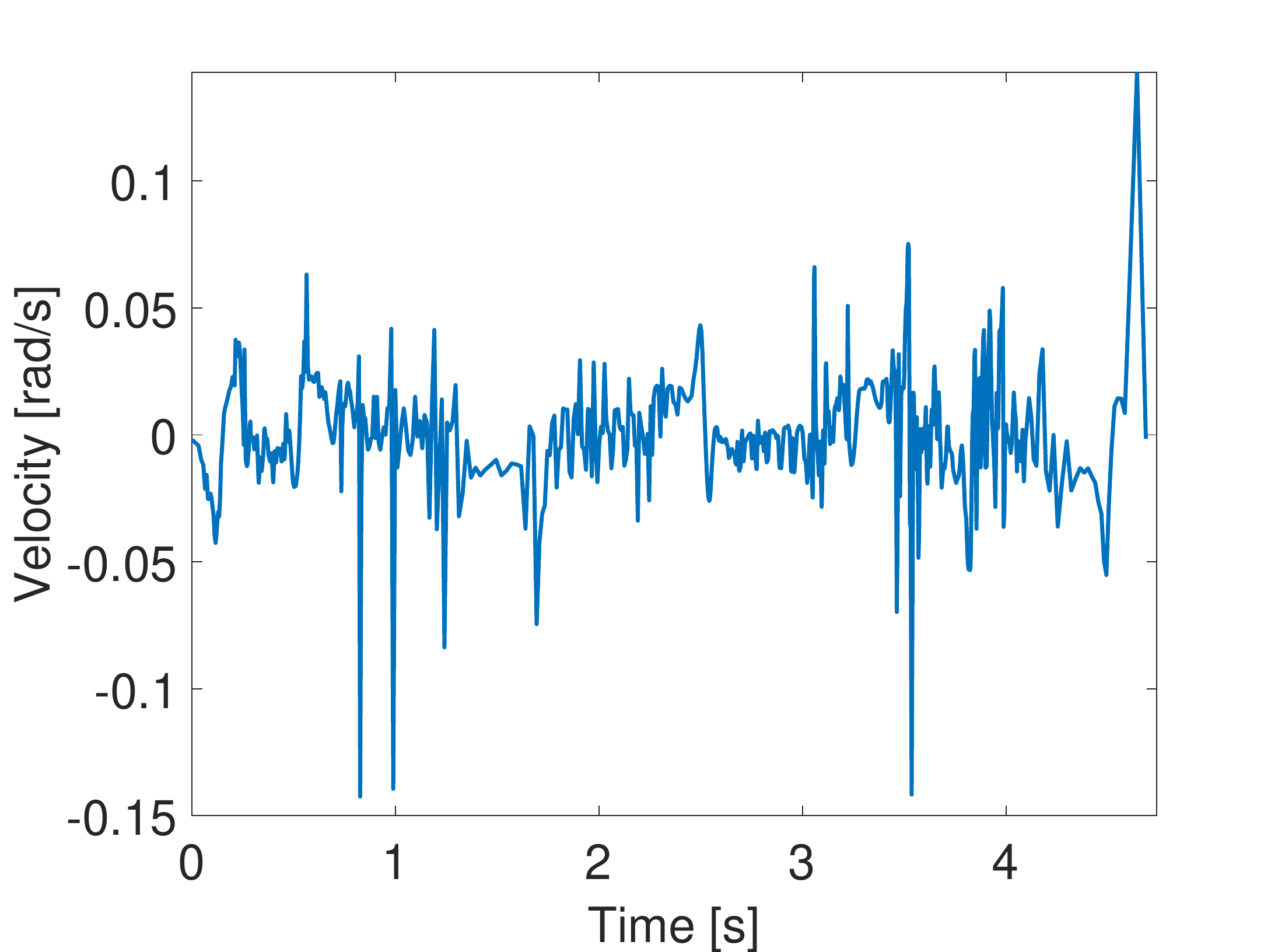}
		\caption{Measured velocity error for the second joint.} \label{fig:demo_velocity_error}	
\end{figure}
\begin{figure}
	\centering
		\includegraphics[width=0.6\columnwidth]{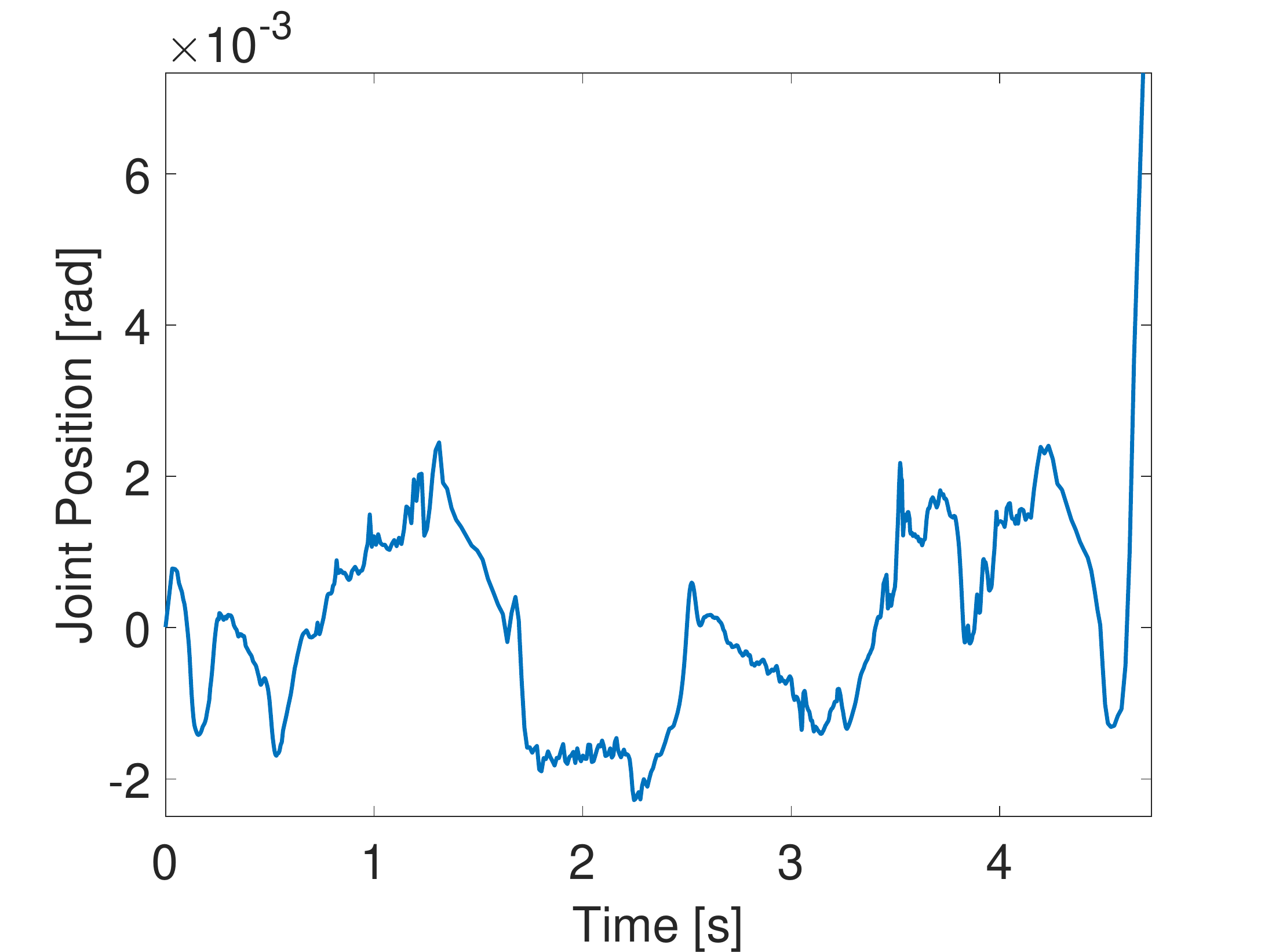}
		\caption{Measured joint position error for the second joint.} \label{fig:demo_path_error}	
\end{figure}

\section{Conclusions}\label{sec:conclusion}
We solved a speed planning problem for the robot manipulators taking into account velocity, acceleration and torque constraints.

We proposed an algorithm which solves a class of optimization problems and we showed that, 
in case of linear constraints, the complexity of such algorithm is optimal.

Using a suitable discretization strategy we proved that the speed planning problem for robotic manipulators falls in the class of problems we introduced and that can be solved using the proposed algorithm.

By numerical experiments, we showed that the proposed algorithm solves the speed planning problem much faster than the other solvers proposed in the literature.
Finally, we applied the proposed algorithm to control a real 6-DOF manipulator.

\section{Appendix}

\subsection{ Proof of Proposition \ref{prop-solution-existence}}

Let $\mathcal{D}$ be the subset of $C^{1}([0,s_{f}],\Real)$ that satisfies conditions
(\ref{con:dynamic-2})-(\ref{con:interp-2}). Set $f(b) = \int_0^{s_f} b(s)^{-1/2} \, ds$ and
 $f^{*} = \inf \{f(b) : b \in \mathcal{D} \} $, then there exists a sequence of
 $b_{i}:[0,s_{f}] \to \Real$, $i \in \mathbb{N}$, such that $b_{i} \in \mathcal{D}$ and
$ \lim_{i\to\infty} f(b_i) = f^{*}$.
By Ascoli-Arzel\`{a} theorem, if the sequence $\{b_{i}\}$ is uniformly bounded and differentiable with
$\{b_{i}^{\prime}\}$  uniformly bounded, then there exists a subsequence $\{b_{i_{k}}\}$ that uniformly converges on $[0,s_{f}]$.
Since $(\forall s \in [0,s_{f}])$ $\|\bgamma^{\prime}(s)\| = 1$, there exists an index $i(s) \in \{1,\dots,p\}$
such that $\gamma_{i(s)}^{\prime}(s)^{2} \geq \frac{1}{p}$.Then, we define a function $\beta : [0,s_{f}] \to \Real_{+}$ as the most restrictive  upper bound of $b$  along the path.
Hence, from constraint~(\ref{con:vel-2}) we can write the following relation
\begin{equation}
\label{eq:velocity-bound}
 0\le b(s) \le \beta(s).
\end{equation}

Since each function $b_{i}$ is uniformly bounded (by (\ref{con:vel-2}) and boundedness of $\beta$) and differentiable, it remains to show that $b_{i}^{\prime}$ is uniformly bounded (i.e., there exists a real constant $C$ such that, $\forall s \in [0,s_{f}] $, $\lvert b_{i}^{\prime}(s) \rvert \le C$). Consider constraint  (\ref{con:acc-2})
\[
-2(\balpha(s) +\bgamma^{\prime\prime}(s)b(s))  \le \bgamma^{\prime}(s)b^{\prime}(s)  \le 2(\balpha(s) - \bgamma^{\prime\prime}(s)b(s)).
\]

Since $\balpha$ and $\bbeta$ are bounded functions, $\bar\alpha = \lVert \balpha\rVert_{\infty} < +\infty$
and $\bar\beta = \| \bbeta \|_{\infty} < +\infty$.
Moreover,  $\bgamma^{\prime\prime}$ is a continuous function on the compact set
$[0,s_{f}]$, then there exists the component-wise
maximum $\bar\gamma^{\prime\prime} = \|\bgamma^{\prime\prime}\|_{\infty}$. Hence, we bound $\bgamma^{\prime}(s)b^{\prime}(s)$ as follows
\[
-2(\bar\alpha+\bar\gamma^{\prime\prime}\bar\beta)\be \le \bgamma^{\prime}(s)b^{\prime}(s) \le  2(\bar\alpha+\bar\gamma^{\prime\prime}\bar\beta)\be.
\]
Since $\forall s \in [0,s_{f}]$ $\| \bgamma^{\prime}(s)\|_{2} = 1$, then there exists an index $i(s)\in\{1,\dots,p\}$ such that  $\| \bgamma^{\prime}_{i(s)}(s) \|_{\infty} \ge \frac{1}{\sqrt{p}}$, which implies

\[
- 2\sqrt{p}(\bar\alpha+\bar\gamma^{\prime\prime}\bar\beta)\le-2\frac{(\bar{\alpha}+\bar{\gamma}^{\prime\prime}\bar{\beta})}{\|\gamma_{i(s)}^{\prime}(s)\|_{\infty}} \le b^{\prime}(s) \le  \frac{(\bar{\alpha}+\bar{\gamma}^{\prime\prime}\bar{\beta})}{\|\gamma_{i(s)}^{\prime}(s)\|_{\infty}}\le 2\sqrt{p}(\bar\alpha+\bar\gamma^{\prime\prime}\bar\beta).
\]
Hence, $|b^{\prime}|$ is uniformly bounded by the real constant $C = 2\sqrt{p}(\bar\alpha+\bar\gamma^{\prime\prime}\bar\beta)$.

To show that $f^{*}(b)\le U < +\infty$, where $U$ is a constant depending on the problem data,  it is
sufficient to find $b\in \mathcal{D}$ such that $f(b) < +\infty$. To this end, set for $\delta \ge 0$
\[
b_{\delta}(s) = \begin{cases}
\delta s (2\delta -s), & s\in[0,\delta),\\
\delta^{3}, & s \in [\delta,s_{f} -\delta],\\
\delta(s_{f} - s)(s- s_{f} + 2\delta) , & s \in (s_{f}-\delta,s_{f}].
\end{cases}
\]
Its derivative is
\[
b_{\delta}^{\prime}(s) = \begin{cases}
2\delta (\delta -s), & s\in[0,\delta),\\
0, & s \in [\delta,s_{f} -\delta],\\
2\delta(s_{f} - s - \delta) , & s \in (s_{f}-\delta,s_{f}].
\end{cases}
\]
Note that (\ref{con:interp-2}) is obviously satisfied by $b_{\delta}$, moreover,  $b_{\delta} \in \mathcal{D}$ if $\delta = 0$.

The maximum value of $b_{\delta}$ is $\delta^{3}$. 
By Assumption~\ref{ass:psi},  there exists the minimum 
$\hat{\psi} =\min_{i=1,\dots,p}\min\{ \bpsi_{i}(s) : s \in [0,s_{f}]\} >0$. Since $\bgamma \in C^{2}([0,s_{f}],\Real^{p})$ it follows that $\hat\gamma^{\prime}=\|\bgamma^{\prime}\|_{\infty}^{2} < +\infty $.
Hence, setting $\hat{\delta} = (\hat\psi / \hat\gamma^{\prime})^\frac{2}{3} $, ~(\ref{con:vel-2}) is satisfied for any 
$
\delta \in [0,\hat\delta]. 
$
After that, we have that $(\forall s \in [0,s_{f}])$
\begin{align*}
&\left|\frac{1}{2}\bd(s)b_{\delta}^{\prime}(s) + \bc(s)b_{\delta}(s) + \bg(s)\right| \le  \\
\le&\frac{1}{2}\left|\bd(s)\right| |b_{\delta}^{\prime}(s)| + |\bc(s)|b_{\delta}(s) + |\bg(s)|. 
\end{align*}
By Assumption~\ref{ass:mu} it follows that 
\begin{align*}
&\frac{1}{2}\left|\bd(s)\right| |b_{\delta}^{\prime}(s)| + |\bc(s)|b_{\delta}(s) + |\bg(s)| < \\
<&\frac{1}{2}\left|\bd(s)\right| |b_{\delta}^{\prime}(s)| + |\bc(s)|b_{\delta}(s) + \bmu(s) -\varepsilon\be.
\end{align*}
There exists $\bar\delta > 0$ such that $(\forall s \in [0,s_f])$  
\[
 \frac{1}{2}|\bd(s)| |b^{\prime}_{\bar\delta}(s)| + |\bc(s)|b_{\bar\delta}(s) \le \varepsilon \be,
\]
which implies that for any $\delta \in [0,\bar\delta]$
\[
\left|\frac{1}{2}\bd(s)b_{\delta}^{\prime}(s) + \bc(s)b_{\delta}(s) + \bg(s)\right| \le  \bmu(s),
\]
i.e., $b_{\delta}$ satisfies constraints (\ref{con:force-2}).
Analogously one can see that constraint (\ref{con:acc-2}) is fulfilled for each $\delta \in [0,\tilde\delta] $ with a sufficiently small $\tilde\delta > 0$. Hence, for each $\delta \in [0,\delta^{*}]$ with $\delta^{*} = \min\{\hat{\delta},\bar{\delta},\tilde{\delta}\}$, it follows  that $b_{\delta} \in \mathcal{D}$.
Finally, by direct computation, it is straightforward to see that $f(b_{\delta}) < +\infty $, with $\delta >0$.
\hfill$\square$

\subsection{Proof of Proposition~\ref{prop_interp_cond}}

For $i=2,\ldots,n-1$, $x_i$, $y_i$, $z_i$ need satisfy conditions
\begin{equation}
\label{eqn_conditions}
\begin{array}{ll}
x_i-\frac{h}{2} y_i + \frac{h^2}{4} z_i=b_{i-1/2},\\ [8pt]
x_i+\frac{h}{2} y_i + \frac{h^2}{4} z_i=b_{i+1/2},\\ [8pt]
y_i - h z_i=\delta_{i-1/2},\\ [8pt]
y_i + h z_i=\delta_{i+1/2}\,.
\end{array}
\end{equation}
Note that the last equation is redundant since it is a linear combination of the first
three (with coefficients -1, +1, and $-h/2$, respectively). Conditions~\eqref{eqn_conditions} can then be rewritten as
\begin{equation}
\label{eq:linsys}
M \vett{x_i\\y_i\\z_i}=
\vett{b_{i-1/2}\\b_{i+1/2}\\ \delta_{i-1/2}}\,,
\end{equation} where
\[
M=\left(
\begin{array}{rrr}
1 &-\frac{h}{2} & \frac{h^2}{4} \\
1 &\frac{h}{2}  &\frac{h^2}{4}   \\
0 & 1 &- h
\end{array}
\right)\,.
\]
The solution of~\eqref{eqn_conditions} (unique since
$M$ is nonsingular) is
\begin{equation}
\label{eq:sollinsys}
\begin{array}{l}
x_i=\displaystyle\frac{6 b_i+b_{i-1}+b_{i+1}}{8} \\ [8pt]
y_i=\displaystyle\frac{b_{i+1}-b_{i-1}}{2h} \\ [8pt]
z_i=\displaystyle\frac{b_{i+1}+b_{i-1}- 2 b_i}{2 h^2}.
\end{array}
\end{equation}
Moreover $x_1$, $y_1$, $z_1$ need satisfy
\[
\begin{array}{ll}
x_1=b_1,\\
x_1+\frac{h}{2} y_1 + \frac{h^2}{4} z_1=b_{1+1/2},\\
  y_1 + h z_1=\delta_{1+1/2}\,,
\end{array}
\]
whose solution is unique and is given by $x_1=b_1$, $z_1=0$,
$y_1=\frac{b_{2}-b_{1}}{2 h}$. Finally,
$x_n$, $y_n$, $z_n$ need satisfy
\[
\begin{array}{ll}
x_n=b_n,\\
x_n-\frac{h}{2} y_n + \frac{h^2}{4} z_n=b_{n-1/2},\\
y_n  + h z_n=\delta_{n-1/2}\,,
\end{array}
\]
whose solution is unique and is given by $x_n=b_n$, $z_n=0$,
$y_n=\frac{b_{n}-b_{n-1}}{h}$.
\hfill $\square$

\subsection{Proof of Proposition \ref{prop:selection}}
Since the objective function (\ref{con:obj-4}) is monotonic non increasing and the variables $b_i$, $i=0,\dots,n$, are non negative and bounded by (\ref{eq:velocity-bound}), we only need to prove that  constraints (\ref{con:acc-4}) and (\ref{con:force-4})  satisfy Assumption \ref{ass:1} for suitable choices  of $\lambda_{j,i}$ and $\eta_{i,j}$, $j = 1,\dots,p$  and $i = 1,\dots,n $.

For the sake of simplicity consider the $j$-th component of the $i$-th sample of (\ref{con:force-4}).
Substituting variable $a_{i}$ and $\tau_{i}$ with (\ref{con:approx-4}) and (\ref{con:dynamic-4}) in constraints (\ref{con:force-4}) we have:
\[
| (d_{j,i} + 2h c_{j,i} \lambda_{j,i})b_{i+1} 
+ (-d_{j,i}  + (1-\lambda_{j,i})2h c_{j,i})b_{i} +  2hg_{j,i} | \le 2h \mu_{j,i}
\]
First, we discuss the cases when $d_{j,i} = 0 $ or $c_{j,i} = 0$. For these cases we set $\lambda_{j,i} = 1$.
If $d_{j,i} = c_{j,i} = 0$, we have $\lvert g_{j,i} \rvert < \mu_{j,i}$ that is always true by Assumption \ref{ass:mu}.
If $d_{j,i} = 0$ and $c_{j,i} \ne 0$ we have $\lvert c_{j,i} b_{i+1} + g_{j,i} \lvert \le \mu_{j,i}$ that,
 after combining it with constraints (\ref{con:vel-4}), becomes $ 0\le b_{i+1} \le \min\{\psi_{i+1}^2/\bgamma_{j,i+1}^{\prime 2},(\mu_{j,i} - g_{j,i})/ \lvert c_{j,i} \rvert \} $. Finally,  if $d_{j,i} \ne 0 $ and $c_{j,i} = 0$ we have $\lvert d_{j,i}(b_{i+1} - b_{i})  +2hg_{j,i} \rvert \le 2h\mu_{j,i} $ that satisfies Assumption \ref{ass:1}.
If $d_{j,i} \ne 0$ and $c_{j,i} \ne 0$  we have:
\[
-2h(\mu_{j,i}+g_{j,i}) \le  (d_{j,i} + 2h c_{j,i} \lambda_{j,i})b_{i+1} + (-d_{j,i} + (1-\lambda_{j,i})2h c_{j,i})b_{i} \le 2h(\mu_{j,i} - g_{j,i}).
\]
In order to satisfy Assumption \ref{ass:1} we choose the value of $\lambda_{j,i}$ such that $(d_{j,i} + 2h c_{j,i} \lambda_{j,i})(-d_{j,i} + (1-\lambda_{j,i})2h c_{j,i})< 0$. Hence, we set
\[
\lambda_{j,i} =
\begin{cases}
1 & \mbox{if } \ d_{j,i}c_{j,i} > 0 \\
0 & \mbox{if } \ d_{j,i}c_{j,i}  < 0.
\end{cases}
\]
Using  this selection technique we can rewrite constraint (\ref{con:dynamic-4}) as follows:
\[
\tau_{j,i} =
\begin{cases}
d_{j,i}a_{i} +c_{j,i}b_{i+1} + g_{j,i}, & \mbox{if} \ d_{j,i}\ c_{j,i} > 0, \\
d_{j,i}a_{i} +c_{j,i}b_{i} + g_{j,i}, &  \mbox{if} \ d_{j,i} \ c_{j,i}  < 0.
\end{cases}
\]
Moreover, we can explicit constraints (\ref{con:force-4}) in the form presented in (\ref{eq:probl}). In fact, if~$d_{j,i} \ c_{j,i} > 0$, the constraint (\ref{con:force-4}) becomes:
\[
\begin{array}{l}
\displaystyle b_{i+1} \le \frac{d_{j,i}}{d_{j,i} + 2hc_{j,i}}b_{i} + \left|  \frac{2h(\mu_{j,i} - g_{j,i})}{d_{j,i} + 2hc_{j,i}}\right|, \\ [12pt]
\displaystyle b_{i} \le \frac{d_{j,i} + 2hc_{j,i}}{d_{j,i}}b_{i+1} + \left| \frac{2h(\mu_{j,i} + g_{j,i})}{d_{j,i} }\right|
\end{array}
\]
and, with  $d_{j,i} \ c_{j,i} < 0 $,
\[
\begin{array}{l}
\displaystyle b_{i+1} \le \frac{d_{j,i} - 2hc_{j,i}}{d_{j,i}}b_{i} + \left| \frac{2h(\mu_{j,i} + g_{j,i})}{d_{j,i} }\right| \\ [12pt]
\displaystyle b_{i} \le \frac{d_{j,i}}{d_i - 2hc_{j,i}}b_{i+1} + \left|  \frac{2h(\mu_{j,i} - g_{j,i})}{d_{j,i} - 2hc_{j,i}}\right|, \\
\end{array}
\]
which satisfy Assumption \ref{ass:1}.

We use the same reasoning  for  constraints (\ref{con:acc-4}). Consider
\[
| (\gamma^{\prime}_{j,i} + 2h \gamma_i^{\prime\prime}\eta_{j,i})b_{i+1}  
 + (-\gamma^{\prime}_{j,i} + (1-\eta_{j,i})2h \gamma^{\prime\prime}_{j,i})b_{i} | \le 2h \alpha_{j,i}
\]

Again, setting $\eta_{j,i} = 1$,
we discuss the cases when $\gamma^{\prime}_{j,i} = 0 $ or $\gamma^{\prime\prime}_{j,i} = 0$.
If $\gamma^{\prime}_{j,i} = \gamma^{\prime\prime}_{j,i} = 0$, we have $\lvert 0 \rvert \le \alpha_{j,i}$ that is always
true. If $\gamma^{\prime}_{j,i} = 0$ and $\gamma^{\prime\prime}_{j,i} \ne 0$ we have $\lvert \gamma^{\prime\prime}_{j,i} b_i \lvert \le \alpha_{j,i}$ that becomes $ 0\le b_{i+1} \le \alpha_{j,i}/ \lvert \gamma^{\prime\prime}_{j,i} \rvert  $ . Finally,  if $\gamma_{j,i}^{\prime} \ne 0 $ and $\gamma^{\prime\prime}_{j,i} = 0$ we have $\lvert b_{i+1} - b_{i}  \rvert \le 2h\alpha_{j,i} / \lvert \gamma^{\prime}_{j,i} \rvert$ that satisfies Assumption \ref{ass:1}.
After that, with $\gamma_{j,i}^{\prime} \ne 0$ and $\gamma_{j,i}^{\prime\prime}\ne0$ we set
\[
\eta_{j,i} =
\begin{cases}
1 & \mbox{if} \ \gamma_{j,i}^{\prime} \ \gamma^{\prime\prime}_{j,i} > 0\\
0 & \mbox{if} \ \gamma_{j,i}^{\prime} \ \gamma^{\prime\prime}_{j,i} < 0
\end{cases},
\]
which implies, for $\gamma_{j,i}^{\prime}\cdot \gamma_{j,i}^{\prime\prime} > 0$:
\[
\begin{array}{l}
\displaystyle b_{i+1} \le \frac{\gamma^{\prime}_{j,i}}{\gamma^{\prime}_{j,i} + 2h\gamma^{\prime\prime}_{j,i}}b_{i} + \left|  \frac{2h\alpha_{j,i}}{\gamma^{\prime}_{j,i} + 2h\gamma^{\prime\prime}_{j,i}}\right| \\ [12pt]
\displaystyle b_{i} \le \frac{\gamma^{\prime}_{j,i} + 2h\gamma^{\prime\prime}_{j,i}}{\gamma^{\prime}_{j,i}}b_{i+1} + \left| \frac{2h\alpha_{j,i}}{\gamma^{\prime}_{j,i} }\right|
\end{array},
\]
and, with  $\gamma^{\prime}_{j,i} \cdot \gamma^{\prime\prime}_{j,i} < 0 $,
\[
\begin{array}{l}
\displaystyle b_{i+1} \le \frac{\gamma^{\prime}_{j,i} - 2h\gamma^{\prime\prime}_{j,i}}{\gamma^{\prime}_{j,i}}b_{i} + \left| \frac{2h\alpha_{j,i}}{\gamma_{j,i}^{\prime} }\right|\\
\displaystyle b_{i} \le \frac{\gamma^{\prime}_{j,i}}{\gamma^{\prime}_{j,i} - 2h\gamma^{\prime\prime}_{j,i}}b_{i+1} + \left|  \frac{2h\alpha_{j,i}}{\gamma^{\prime}_{j,i} - 2h\gamma^{\prime\prime}_{j,i}}\right| \\ [12pt]
\end{array}.
\]
which satisfy Assumption \ref{ass:1}.

\hfill$\square$


%

\bibliographystyle{IEEEtranN}
\bibliography{VelPlan}

\end{document}